\documentclass{article} % For LaTeX2e
\usepackage{PRIMEarxiv}
\errorcontextlines\maxdimen

\usepackage{natbib} 
\usepackage{amsmath,amsfonts,bm}

\def\eqref#1{equation~\ref{#1}}
\def\1{\bm{1}}

\DeclareMathAlphabet{\mathsfit}{\encodingdefault}{\sfdefault}{m}{sl}
\SetMathAlphabet{\mathsfit}{bold}{\encodingdefault}{\sfdefault}{bx}{n}

\usepackage{amsmath, amssymb, amsthm}
\usepackage{graphicx}
\usepackage{hyperref}
\usepackage{url}
\usepackage{bm} 
\usepackage{booktabs} 
\usepackage{multirow}
\usepackage{algorithm}
\usepackage{algorithmic}
\numberwithin{equation}{section}
\usepackage{adjustbox}
\newtheorem{theorem}{Theorem}[section]
\newtheorem{lemma}{Lemma}[section]
\newtheorem{proposition}[theorem]{Proposition}
\newtheorem{corollary}[theorem]{Corollary}
\theoremstyle{definition}
\newtheorem{definition}[theorem]{Definition}

\theoremstyle{remark}

\title{Scalable Random Wavelet Features: Efficient Non-Stationary Kernel Approximation with Convergence Guarantees}
\author{Anonymous Authors}
\author{Sawan Kumar \\
  Department of Applied Mechanics\\
  Indian Institute of Technology Delhi\\
  \texttt{sawan.kumar@am.iitd.ac.in} \\
  \And
  Souvik Chakraborty \\
  Department of Applied Mechanics \\
  Yardi School of Artificial Intelligence (ScAI) \\
  Indian Institute of Technology Delhi \\
  \texttt{souvik@am.iitd.ac.in}
}

\begin{document}
\maketitle

\begin{abstract}
  Modeling non-stationary processes, where statistical properties vary across the input domain, is a critical challenge in machine learning; yet most scalable methods rely on a simplifying assumption of stationarity. This forces a difficult trade-off: use expressive but computationally demanding models like Deep Gaussian Processes, or scalable but limited methods like Random Fourier Features (RFF). We close this gap by introducing Random Wavelet Features (RWF), a framework that constructs scalable, non-stationary kernel approximations by sampling from wavelet families. By harnessing the inherent localization and multi-resolution structure of wavelets, RWF generates an explicit feature map that captures complex, input-dependent patterns. Our framework provides a principled way to generalize RFF to the non-stationary setting and comes with a comprehensive theoretical analysis, including positive definiteness, unbiasedness, and uniform convergence guarantees. We demonstrate empirically on a range of challenging synthetic and real-world datasets that RWF outperforms stationary random features and offers a compelling accuracy-efficiency trade-off against more complex models, unlocking scalable and expressive kernel methods for a broad class of real-world non-stationary problems.
\end{abstract}

\section{Introduction}

The ability to model complex, real-world phenomena is one of the central challenges in machine learning. Domains such as geospatial modeling, where terrain varies drastically across regions, or speech analysis, where signals exhibit bursts of volatility, are often characterized by pronounced \textit{non-stationarity}, meaning their statistical properties change across the input space. Gaussian Processes (GPs) offer a principled framework for such problems, providing robust uncertainty estimates and flexible, non-parametric modeling \citep{williams2006gaussian}. Despite these advantages, exact GPs suffer from two major limitations: their expressivity is often constrained by the choice of kernel, and their computational cost scales cubically with the number of training points, rendering them impractical for modern large-scale applications \citep{liu2020gaussian}.

Most of the current approaches force a trade-off between expressivity and efficiency. On one hand, methods like Random Fourier Features (RFF) achieve impressive scalability by approximating the kernel with a linear-in-data feature map \citep{pmlr-v70-cutajar17a,avron2017random,rahimi2007random}. Yet, their dependence on Bochner's theorem \citep{bochner2005harmonic} fundamentally restricts them to stationary kernels, which assume uniform behavior across the entire domain. Applying these models to non-stationary data leads to systematic mis-specification, resulting in compromised predictive accuracy and uncalibrated uncertainty estimates \citep{cheema2024integrated,hensman2013gaussian,hensman2018variational}. On the other hand, expressive models like Deep GPs \citep{salimbeni2019deep}, spectral mixtures \citep{tompkins2020sparse}, and input-dependent kernels \citep{iddgp2020} can capture non-stationarity, but they often reintroduce prohibitive computational costs, complex inference schemes, and challenges in optimization and hyperparameter tuning. The gap between scalable stationary models and complex non-stationary ones still remains.

In this work, we close this gap by introducing \textbf{Random Wavelet Features (RWF)}, a scalable and expressive framework for non-stationary kernel approximation. Instead of relying on globally supported sinusoidal bases like RFF and its variants, we construct random features from \textit{wavelets} family of functions that are inherently localized in both space and frequency. By sampling wavelets at random scales and shifts, RWF generates an explicit feature map that can adapt to local data characteristics. This multi-resolution structure allows the model to capture sharp, localized events with fine-scale wavelets while simultaneously modeling smooth, long-range trends with coarse-scale wavelets. The result is a principled method that generalizes random features to the non-stationary setting while preserving the linear-time complexity that makes them elegant and efficient.
% Our main contributions are:
Our main contributions are threefold. First, we provide a comprehensive theoretical analysis of RWF, including positive definiteness of the induced kernels, unbiasedness and variance bounds, and uniform convergence guarantees with explicit sample complexity. Second, we show that RWF achieves $\mathcal{O}(ND^2)$ training complexity, retaining the scalability of random feature methods while directly encoding non-stationarity through wavelet localization. Finally, we demonstrate empirically on synthetic, speech, and large-scale regression benchmarks that RWF consistently improves upon stationary random features and offers atleast competitive accuracy–efficiency trade-off against more complex non-stationary models.

\subsection{Related Work}

\textbf{Scalable Kernel Approximations.}
The random features framework was pioneered by \citep{rahimi2007random}, showing shift-invariant kernels can be approximated using random Fourier features with linear-time computations. This framework has since been extended in several directions, including computationally efficient variants such as Fast kernel learning \citep{wilson2014fast}, theoretical guarantees on approximation error \citep{sriperumbudur2015optimal, avron2017random,li2021towards}, and structured sampling schemes \citep{choromanski2017structured}. There are works that extend the random Fourier features beyond classical settings using variational approximations \citep{hensman2018variational}, adaptive feature learning \citep{zhen2020learning,shi2024adaptive}, and even a connection to quantum machine learning \citep{landman2022classically}. Recent progress extends spectral approximations to capture a wider spectrum of kernel families, thereby enhancing the expressivity of scalable feature maps \citep{langrene2024spectral}. While these methods achieve scalability, their reliance on stationary Fourier bases limits their ability to capture non-stationary or localized phenomena, which are crucial in many scientific domains.

\textbf{Wavelet-motivated approximations.}
Wavelets have previously been used for kernel design through
wavelet support vector machines and wavelet kernel learning
\citep{zhang2004wavelet,yger2011wavelet}, where kernels are derived analytically, or wavelet transforms are used as preprocessing.
More recently, \citep{guo2024wavelet} proposed a Bayesian kernel model
based on fixed wavelet bases for high-dimensional Bayesian linear regression.
While these approaches illustrate the value of wavelets for capturing
local structure, they rely on fixed or predefined wavelets
dictionaries and do not provide scalable Monte Carlo approximations or
theoretical guarantees such as unbiasedness or uniform convergence.

\textbf{Hybrid and Modern Kernel Learning.}
Several approaches have been developed to capture non-stationarity in Gaussian processes.
Spectral mixture kernels \citep{wilson2013gaussian,remes2017non} extend the idea of stationary spectral representations to capture richer structures,
while deep Gaussian processes \citep{damianou2013deep,salimbeni2019deep} enhance flexibility through hierarchical composition.
Despite their expressivity, these models typically retain high complexity in the number of training points.
Beyond these directions, some work has explored combining scalable approximations with expressive models.
Kernel interpolation methods (KISS-GP) \citep{wilson2015kernel} exploit structure in inducing-point methods,
while deep kernel learning \citep{wilson2016deep} integrates neural networks with Gaussian processes. Recently, these ideas have been pushed further with deep random features for spatiotemporal modeling \citep{chen2024deep}, graph-based random Fourier features for scalable GP \citep{zhang2025graph}, and compositional kernels in adaptive RKHS representation \citep{shi2024adaptive}.
Despite these advances, existing methods often trade off scalability, expressivity, and interpretability. Our work is positioned at this intersection where we aim to design feature maps that inherit the scalability of random features while enabling flexible, non-stationary modeling.

\section{Preliminaries and Background}
A brief review of Gaussian Process regression (GPR), sparse variational GPs, and random-feature GPs is provided to ground our wavelet construction.

\subsection{Gaussian Processes}
Given training inputs \(\mathbf{X}=[\bm{x}_1,\dots,\bm{x}_N]^\top\in\mathbb{R}^{N\times d}\) and targets \(\bm{y}\in\mathbb{R}^N\), we consider a Gaussian process prior over a latent function \(f\). The observations \(y_n\) are assumed to be noisy evaluations of this function at the corresponding inputs \(\bm{x}_n\):
\begin{equation}
  f \sim \mathcal{GP}(0,k), \qquad y_n = f(\bm{x}_n) + \varepsilon_n, \quad \text{where} \quad \varepsilon_n \sim \mathcal{N}(0, \sigma^2).
\end{equation}
We define the covariance matrix \(\mathbf{K}_{XX}\in\mathbb{R}^{N\times N}\) with \([\mathbf{K}_{XX}]_{ij}=k(\bm{x}_i,\bm{x}_j)\).
The log marginal likelihood, used for training the hyperparameters of GP, is given by the following expression:
\begin{equation}
  \log p(\bm{y}\mid \mathbf{X}) = -\tfrac{1}{2}\bm{y}^\top (\mathbf{K}_{XX}+\sigma^2 \mathbf{I}_N)^{-1}\bm{y}
  - \tfrac{1}{2}\log\det(\mathbf{K}_{XX}+\sigma^2 \mathbf{I}_N) - \tfrac{N}{2}\log(2\pi).
\end{equation}
For a test input \(\bm{x}_*\), let \(\bm{k}_{*X}=[k(\bm{x}_*,\bm{x}_1),\dots,k(\bm{x}_*,\bm{x}_N)]\), \(k_{**}=k(\bm{x}_*,\bm{x}_*)\), and
\begin{equation}
  \bm{\alpha} = (\mathbf{K}_{XX}+\sigma^2 \mathbf{I}_N)^{-1}\bm{y}.
\end{equation}
The predictive posterior moments for test input \(\bm{x}_*\) takes the following form:
\begin{subequations}
  \begin{align}
    \mu_*(\bm{x}_*)      & = \bm{k}_{*X}\bm{\alpha},                                                      \\
    \sigma^2_*(\bm{x}_*) & = k_{**} - \bm{k}_{*X}(\mathbf{K}_{XX}+\sigma^2 \mathbf{I}_N)^{-1}\bm{k}_{X*},
  \end{align}
\end{subequations}
with \(\bm{k}_{X*}=\bm{k}_{*X}^\top\).
The key bottleneck of exact inference is its computational costs \(O(N^3)\) time, and \(O(N^2)\) memory. To address these challenges, several approaches have been introduced in the literature; the most common ones are the sparse approximation of GP.

\textbf{Stochastic Variational GPs (SVGP).}
In SVGP, we introduce $M_u$, inducing inputs 
$\mathbf{Z}_u=[\bm{z}_1,\dots,\bm{z}_{M_u}]^\top$ and inducing variables 
$\bm{u}=f(\mathbf{Z}_u)$ equipped with the prior 
$p(\bm{u})=\mathcal{N}(\bm{0},\mathbf{K}_{uu})$, where 
$[\mathbf{K}_{uu}]_{ij}=k(\bm{z}_i,\bm{z}_j)$.
Defining 
$\mathbf{K}_{fu}\in\mathbb{R}^{N\times M_u}$ with 
$[\mathbf{K}_{fu}]_{nm}=k(\bm{x}_n,\bm{z}_m)$ and 
$\mathbf{Q}_{ff}=\mathbf{K}_{fu}\mathbf{K}_{uu}^{-1}\mathbf{K}_{uf}$, the conditional prior becomes
\begin{equation}
p(\bm{f}\mid\bm{u})
  =\mathcal{N}\!\big(\mathbf{K}_{fu}\mathbf{K}_{uu}^{-1}\bm{u},\;
                     \mathbf{K}_{ff}-\mathbf{Q}_{ff}\big).
\end{equation}

A Gaussian variational posterior $q(\bm{u})=\mathcal{N}(\bm{m},\mathbf{S})$
induces 
$q(\bm{f})=\mathcal{N}(\bm{\mu},\mathbf{\Sigma})$, 
with 
$\bm{\mu}=\mathbf{A}\bm{m}$ and  
$\mathbf{\Sigma}=\mathbf{K}_{ff}-\mathbf{Q}_{ff}+\mathbf{A}\mathbf{S}\mathbf{A}^\top$, 
where $\mathbf{A}=\mathbf{K}_{fu}\mathbf{K}_{uu}^{-1}$.
Under a Gaussian likelihood 
$p(y_n\!\mid f_n)=\mathcal{N}(y_n\!\mid f_n,\sigma^2)$,  
the ELBO simplifies to
\begin{equation}
\mathcal{L}
  =\sum_{n=1}^N 
     \mathbb{E}_{q(f_n)}[\log p(y_n\!\mid f_n)]
   -\mathrm{KL}\!\left(q(\bm{u})\,\Vert\,p(\bm{u})\right),
\end{equation}
where 
$\mathbb{E}_{q(f_n)}[\log p(y_n\!\mid f_n)]
 =-\tfrac{1}{2}\sigma^{-2}\big[(y_n-\mu_n)^2+\Sigma_{nn}\big]
  -\tfrac{1}{2}\log(2\pi\sigma^2)$.

Using a minibatch $\mathcal{B}$ of size $b$ gives the unbiased estimator  
$\widehat{\mathcal{L}}=(N/b)\sum_{n\in\mathcal{B}}
\mathbb{E}_{q(f_n)}[\log p(y_n\!\mid f_n)]
-\mathrm{KL}(q(\bm{u})\Vert p(\bm{u}))$,  
with per-iteration complexity $O(bM_u^2)$ plus a one-time  
$O(M_u^3)$ factorization of $\mathbf{K}_{uu}$.

Predictive moments at a test point $\bm{x}_*$ follow the closed-form GP equations:
$\mu_*(\bm{x}_*)=\bm{k}_{*u}\mathbf{K}_{uu}^{-1}\bm{m}$ and 
$\sigma_*^2(\bm{x}_*)=k_{**}-\bm{k}_{*u}\mathbf{K}_{uu}^{-1}\bm{k}_{u*}
+\bm{k}_{*u}\mathbf{K}_{uu}^{-1}\mathbf{S}\mathbf{K}_{uu}^{-1}\bm{k}_{u*}$,
where  
$\bm{k}_{*u}=[k(\bm{x}_*,\bm{z}_1),\dots,k(\bm{x}_*,\bm{z}_{M_u})]$.

\subsection{Random Fourier Feature GPs (RFF-GP)}
The random Fourier features approach introduced by \citet{rahimi2007random}
approximates stationary kernels using explicit feature maps.
Consider a zero-mean Gaussian process $f \sim \mathcal{GP}(0,k)$ with a stationary kernel
$k(\bm{x},\bm{x}') = k(\bm{x}-\bm{x}')$.
By Bochner’s theorem \citep{bochner2005harmonic}, the kernel admits the spectral representation
\begin{equation}
  k(\bm{x},\bm{x}') = \int_{\mathbb{R}^d} e^{i\bm{\omega}^\top(\bm{x}-\bm{x}')} p(\bm{\omega}) \, d\bm{\omega},
\end{equation}
where $p(\bm{\omega})$ is the normalized spectral density of kernel $k$.
Introducing a random phase $b \sim \mathrm{Unif}[0,2\pi]$, this can be expressed as an expectation
over cosine features:
\begin{equation}
  k(\bm{x},\bm{x}') = \mathbb{E}_{\bm{\omega},b}\!\left[2\cos(\bm{\omega}^\top\bm{x}+b)\cos(\bm{\omega}^\top\bm{x}'+b)\right].
\end{equation}

Approximating the expectation with $D$ Monte Carlo samples
$\{(\bm{\omega}_j,b_j)\}_{j=1}^D$ yields the random feature map $\bm{z}:\mathcal{X}\to\mathbb{R}^D$,
\begin{equation}
  \bm{z}(\bm{x}) = \frac{1}{\sqrt{D}}\big[\phi_1(\bm{x}),\ldots,\phi_D(\bm{x})\big]^\top,
  \qquad \phi_j(\bm{x}) = \sqrt{2}\cos(\bm{\omega}_j^\top \bm{x} + b_j),
\end{equation}
such that the approximate kernel is $\hat{k}(\bm{x},\bm{x}') = \bm{z}(\bm{x})^\top \bm{z}(\bm{x}')$.

From the GP perspective, this corresponds to replacing the infinite-dimensional feature space
with the finite-dimensional features $\bm{z}(\cdot)$, leading to a Bayesian linear regression model.
Placing a Gaussian prior $\bm{w} \sim \mathcal{N}(\bm{0}, \mathbf{I}_D)$ on the weights, the Gaussian posterior with covariance and mean given by,
\begin{subequations}
  \begin{align}
    \mathbf{S}_{\bm{w}} & = \big(\mathbf{I}_D + \sigma^{-2}\mathbf{Z}^\top \mathbf{Z}\big)^{-1}, \\
    \bm{m}_{\bm{w}}     & = \sigma^{-2}\mathbf{S}_{\bm{w}} \mathbf{Z}^\top \bm{y},
  \end{align}
\end{subequations}
where $\mathbf{Z}\in\mathbb{R}^{N\times D}$ collects the feature maps of the training inputs.
The Gaussian predictive distribution for a new test point $\bm{x}_*$ has the mean and covariance defined as,
\begin{subequations}
  \begin{align}
    \mu_*(\bm{x}_*)                   & = \bm{z}(\bm{x}_*)^\top \bm{m}_{\bm{w}},                                 \\
    \mathrm{Var}[y_*\mid \mathcal{D}] & = \bm{z}(\bm{x}_*)^\top \mathbf{S}_{\bm{w}} \bm{z}(\bm{x}_*) + \sigma^2.
  \end{align}
\end{subequations}

The RFF-GP framework is thus a scalable approximation for stationary kernels. However, its reliance on globally supported Fourier features limits its ability to model non-stationarity. For further details on RFF and examples, see Appendix~\ref{app:rf_gp}

\section{Proposed methodology}
Random Fourier-based kernel approximation methods exploit Bochner's theorem \citep{rahimi2007random} to yield scalable approximations for stationary kernels, but they struggle to capture non-stationarity. Sparse variational GPs model non-stationarity with expressive kernels yet rely on inducing sets and cubic costs in \(M_u\) per update. We propose Random Wavelet Features (RWF), which construct non-stationary kernels via multi-resolution, locally supported wavelets. By sampling wavelet scales and shifts, RWF provides an explicit feature map \(z(\cdot)\) that: (i) induces a positive definite non-stationary kernel; (ii) preserves linear-time training and prediction as in RFF-GPs; and (iii) captures localized, multi-resolution structure that stationary RFF lacks.

\subsection{Wavelet-based Kernel Construction}
To model non-stationarity, a kernel's properties must adapt across the input domain. Stationary kernels, often approximated by Random Fourier Features (RFF), rely on globally supported sinusoidal bases that are inherently spatially invariant. In contrast, wavelets offer a natural alternative by providing a basis that is localized in both space and frequency. By randomizing the scale (controlling frequency) and shift (controlling spatial location) of wavelet atoms, we can construct a flexible, non-stationary kernel.

Our construction begins with a mother wavelet \(\psi: \mathbb{R}^d \to \mathbb{R}\), a function with zero mean and unit \(L^2\) norm (see Appendix~\ref{app:wavelet_prelim} for details). From \(\psi\), we generate a family of wavelet atoms via isotropic scaling and translation:
\begin{equation}
  \psi_{s,\bm{t}}(\bm{x}) = s^{-d/2} \psi\left(\frac{\bm{x}-\bm{t}}{s}\right), \quad \text{for scale } s>0 \text{ and shift } \bm{t} \in \mathbb{R}^d.
\end{equation}
Each atom \(\psi_{s,\bm{t}}\) is a localized ``wave packet'' centered at \(\bm{t}\) with spatial extent proportional to \(s\). Let \(\Theta = (0, \infty) \times \mathbb{R}^d\) be the parameter space of scales and shifts. We define a non-stationary kernel by integrating over this space with respect to a non-negative measure \(\mu(ds\,d\bm{t})\):
\begin{equation}
  k(\bm{x}, \bm{y}) = \int_{\Theta} \psi_{s,\bm{t}}(\bm{x}) \psi_{s,\bm{t}}(\bm{y}) \, \mu(ds\,d\bm{t}).
  \label{eq:wavelet_kernel_measure}
\end{equation}
This construction guarantees positive definiteness, as the integrand is a product of scalar features. If \(\mu\) has a density \(p(s, \bm{t}) \ge 0\), the kernel becomes:
\begin{equation}
  k(\bm{x}, \bm{y}) = \int_{0}^{\infty} \int_{\mathbb{R}^d} \psi_{s,\bm{t}}(\bm{x}) \psi_{s,\bm{t}}(\bm{y}) \, p(s, \bm{t}) \, d\bm{t} \, ds.
  \label{eq:wavelet_kernel_density}
\end{equation}
The density \(p(s, \bm{t})\) governs the kernel's properties. A common choice is a factorized form \(p(s, \bm{t}) = p_s(s) p_t(\bm{t})\), where \(p_s\) (e.g., log-uniform) spans multiple resolutions and \(p_t\) (e.g., uniform over the data's convex hull) provides spatial coverage.

\subsection{Random Wavelet Feature Sampling Strategy}
The integral in \eqref{eq:wavelet_kernel_density} is typically intractable. We approximate it via Monte Carlo sampling, which forms the basis of our random features.
\begin{definition}[Random Wavelet Features]
  Sample \((s_i, \bm{t}_i)_{i=1}^D\) i.i.d.~from a distribution with density \(p(s, \bm{t})\) and define the random feature map \(z: \mathbb{R}^d \to \mathbb{R}^D\) as:
  \begin{equation}
    z(\bm{x}) = \frac{1}{\sqrt{D}} \left[ \psi_{s_1,\bm{t}_1}(\bm{x}), \dots, \psi_{s_D,\bm{t}_D}(\bm{x}) \right]^\top.
  \end{equation}
  The corresponding kernel approximation is \(\hat{k}(\bm{x}, \bm{y}) = z(\bm{x})^\top z(\bm{y})\).
\end{definition}
By construction, \(\hat{k}(\bm{x}, \bm{y})\) is an unbiased estimator of \(k(\bm{x}, \bm{y})\). This formulation transforms the kernel method into a Bayesian linear model, enabling efficient training and prediction. The full procedure is detailed in Algorithm~\ref{alg:RWF-gp}.

\begin{algorithm}[h]
  \caption{RWF-GP Training and Prediction}
  \label{alg:RWF-gp}
  \begin{algorithmic}[1]
    \STATE \textbf{Input:} Training data \((\mathbf{X}, \bm{y})\), test inputs \(\mathbf{X}_*\), number of features \(D\), wavelet \(\psi\), sampling distribution \(p(s,\bm{t})\).
    \STATE \textbf{Hyperparameters:} Noise variance \(\sigma^2\), parameters of \(p(s,\bm{t})\).
    \STATE \textbf{Training:}
    \STATE Sample \((s_i, \bm{t}_i) \sim p(s,\bm{t})\) for \(i=1, \dots, D\).
    \STATE Construct feature matrix \(Z \in \mathbb{R}^{N \times D}\) where \(Z_{ni} = \frac{1}{\sqrt{D}}\psi_{s_i,\bm{t}_i}(\bm{x}_n)\).
    \STATE Compute weight posterior: \(S_{\bm{w}} = (I_D + \sigma^{-2} Z^\top Z)^{-1}\) and \(\bm{m}_{\bm{w}} = \sigma^{-2} S_{\bm{w}} Z^\top \bm{y}\).
    \STATE Optimize hyperparameters (e.g., \(\sigma^2\), params of \(p\)) by maximizing the marginal likelihood of the Bayesian linear model.
    \STATE \textbf{Prediction:}
    \STATE Construct test feature matrix \(Z_* \in \mathbb{R}^{N_* \times D}\) where \([Z_*]_{ji} = \frac{1}{\sqrt{D}}\psi_{s_i,\bm{t}_i}(\bm{x}_{*,j})\).
    \STATE Compute predictive mean: \(\bm{\mu}_* = Z_* \bm{m}_{\bm{w}}\).
    \STATE Compute predictive variance: \(\bm{\sigma}^2_* = \text{diag}(Z_* S_{\bm{w}} Z_*^\top) + \sigma^2\).
    \STATE \textbf{Output:} Predictive distribution \(\mathcal{N}(\bm{\mu}_*, \bm{\sigma}^2_*)\).
  \end{algorithmic}
\end{algorithm}

\subsection{Practical Considerations}
\textbf{Computational Complexity.}
RWF is efficient because computational cost scales linearly with the dataset size. Constructing $D$ random wavelet features over $N$ inputs of dimension $d$ costs 
$\mathcal{O}(NDd)$, after which training reduces to the primal form of GP regression in a $D$-dimensional feature space. Forming $Z^\top Z$ requires $\mathcal{O}(ND^2)$ and the resulting $D\times D$ system is solved in $\mathcal{O}(D^3)$, so for $N\!\gg\!D$ the 
overall training cost is dominated by $\mathcal{O}(ND^2)$. Predictions require 
$\mathcal{O}(D^2)$ per test point. In contrast, Exact GPs scale as $\mathcal{O}(N^3)$ and SVGP incurs $\mathcal{O}(NM^2)$ \emph{per} optimization step due to iterative variational updates. RWF computes its posterior in a single closed-form solve, yielding substantial wall-clock speedups for large-scale non-stationary learning.

The key to modeling non-stationarity lies in the practical choices for the wavelet family and sampling distribution. The choice of mother wavelet \(\psi\) (e.g., Morlet for time-frequency analysis or Daubechies for sharp transitions) and the sampling distribution \(p(s, \bm{t})\) (e.g., log-uniform for scales, uniform for shifts) \citep{bergstra2012random,jeffreys1946invariant} allows the model to adapt to multi-resolution signal structures. For stable training, it is beneficial to regularize the model by constraining the sampling range for scales and applying weight decay to the linear model.

\section{Theoretical Analysis}

To analyze the quality of our approximation, we establish uniform convergence guarantees. Our analysis relies on bounding the complexity of the function class induced by the wavelet features. We define the following key quantities: \(B = \sup_{s,t,\bm{x}} |\psi_{s,t}(\bm{x})|\) as the uniform bound on the feature magnitude, and \(K = \sup_{\bm{x}} k(\bm{x},\bm{x}')\) as the maximum kernel value.

\subsection{Positive Definiteness of Wavelet Kernels}

\begin{theorem}[Positive Definiteness of Wavelet-Based Kernels]\label{thm:kernel_app}
  Let \(\psi: \mathbb{R}^d \to \mathbb{R}\) be a mother wavelet function, and define the family of wavelets as \(\psi_{s,t}(\bm{x}) = s^{-d/2} \psi\left( s^{-1} (\bm{x} - t) \right)\) for scale \(s > 0\) and translation \(t \in \mathbb{R}^d\). Let \(p(s, t): \mathbb{R}^+ \times \mathbb{R}^d \to [0, \infty)\) be a non-negative measure such that the integral is well-defined and finite for all \(\bm{x}, \bm{y} \in \mathbb{R}^d\). Then, the function
  \begin{equation}
    k(\bm{x}, \bm{y}) = \int_{\mathbb{R}^+} \int_{\mathbb{R}^d} \psi_{s,t}(\bm{x}) \psi_{s,t}(\bm{y}) \, p(s, t) \, dt \, ds
  \end{equation}
  is a positive definite kernel on \(\mathbb{R}^d \times \mathbb{R}^d\).
\end{theorem}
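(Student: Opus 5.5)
The plan is to verify the two defining properties of a positive (semi-)definite kernel directly from the integral representation: symmetry, and nonnegativity of every Gram quadratic form. In the kernel-methods sense used throughout the paper, ``positive definite'' means that for every finite set of points the Gram matrix is positive semidefinite. I would note this convention at the outset. Strict definiteness cannot hold in general, since for instance \(p\) could be supported on a single \((s,t)\), which yields a rank-one kernel.

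Symmetry is immediate. The integrand \(\psi_{s,t}(\bm{x})\psi_{s,t}(\bm{y})\) is symmetric in \((\bm{x},\bm{y})\) pointwise, so \(k(\bm{x},\bm{y})=k(\bm{y},\bm{x})\).

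For the quadratic form, I fix \(n\in\mathbb{N}\), points \(\bm{x}_1,\dots,\bm{x}_n\in\mathbb{R}^d\) and coefficients \(c_1,\dots,c_n\in\mathbb{R}\), and write
\begin{equation}
\sum_{i,j=1}^n c_i c_j k(\bm{x}_i,\bm{x}_j)=\sum_{i,j=1}^n c_i c_j \int_{\mathbb{R}^+}\int_{\mathbb{R}^d}\psi_{s,t}(\bm{x}_i)\psi_{s,t}(\bm{x}_j)\,p(s,t)\,dt\,ds .
\end{equation}
Because the sum is finite and each integral is assumed well-defined and finite, linearity of the integral lets me move the sum inside. The integrand then becomes \(\bigl(\sum_i c_i\psi_{s,t}(\bm{x}_i)\bigr)^2 p(s,t)\), which is nonnegative since \(p\ge 0\). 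Its integral is therefore nonnegative, which gives the claim. Equivalently, \(k(\bm{x},\bm{y})=\langle \Phi(\bm{x}),\Phi(\bm{y})\rangle_{L^2(p)}\) with the feature map \(\Phi(\bm{x})=\psi_{\cdot,\cdot}(\bm{x})\in L^2(\Theta,p)\), and every kernel given by an inner product of features is positive semidefinite.

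The only delicate step is the interchange of the finite sum with the integral. Linearity requires each term \(\psi_{s,t}(\bm{x}_i)\psi_{s,t}(\bm{x}_j)p\) to be integrable, and the hypothesis only states that the integral is ``well-defined and finite.'' I would read this as absolute integrability. Alternatively, finiteness of \(k(\bm{x},\bm{x})=\int\psi_{s,t}(\bm{x})^2p\) for every \(\bm{x}\) puts each \(\psi_{\cdot,\cdot}(\bm{x})\) in \(L^2(p)\), and Cauchy--Schwarz then gives \(|\psi_{s,t}(\bm{x}_i)\psi_{s,t}(\bm{x}_j)|\in L^1(p)\), which makes all the cross terms absolutely integrable and justifies the exchange rigorously.
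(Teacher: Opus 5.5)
Your proposal is correct and follows essentially the same route as the paper's proof: expand the Gram quadratic form, move the finite sum inside the integral, and observe that the integrand becomes \(\bigl(\sum_i c_i\psi_{s,t}(\bm{x}_i)\bigr)^2 p(s,t)\ge 0\). Your extra points (symmetry, the positive-semidefinite convention, and using finiteness of \(k(\bm{x},\bm{x})\) with Cauchy--Schwarz to justify the interchange) fill in details the paper leaves implicit.
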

(Proof in Appendix~\ref{app:proof_kernel_app}.)

\subsection{Unbiasedness and Variance Bounds}

\begin{lemma}[Unbiasedness]\label{lem:unbiased}
  For all \(\bm{x},\bm{y}\in\mathcal{X}\), the wavelet random feature approximation is unbiased: \(\mathbb{E}[\hat k(\bm{x},\bm{y})]=k(\bm{x},\bm{y})\).
\end{lemma}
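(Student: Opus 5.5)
The plan is a direct computation from the definition of Random Wavelet Features, using linearity of expectation. One point needs care first. For \(\hat k\) to be an unbiased estimator of \(k\), the sampling density \(p(s,\bm{t})\) must be a probability density, i.e. normalized so that \(\int_\Theta p(s,\bm{t})\,ds\,d\bm{t}=1\). This matches the definition of the features, where \((s_i,\bm{t}_i)\) are drawn i.i.d. from \(p\). If instead \(\mu\) is a general finite measure, I would write \(\mu = Z\,\pi\) with \(Z=\mu(\Theta)<\infty\) and \(\pi\) a probability measure. Sampling from \(\pi\) and rescaling each feature by \(\sqrt{Z}\) then reduces to the normalized case. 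I will state this normalization assumption explicitly at the start.

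\textbf{Step 1: expand the estimator.} By the definition of the feature map,
\[
\hat k(\bm{x},\bm{y}) = z(\bm{x})^\top z(\bm{y}) = \frac{1}{D}\sum_{i=1}^D \psi_{s_i,\bm{t}_i}(\bm{x})\,\psi_{s_i,\bm{t}_i}(\bm{y}).
\]
So \(\hat k\) is the empirical mean of the \(D\) random variables \(\xi_i := \psi_{s_i,\bm{t}_i}(\bm{x})\psi_{s_i,\bm{t}_i}(\bm{y})\), which are i.i.d. because the pairs \((s_i,\bm{t}_i)\) are i.i.d.

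\textbf{Step 2: integrability of each summand.} Before applying linearity of expectation, I need \(\mathbb{E}|\xi_1|<\infty\). This follows from the hypothesis of Theorem~\ref{thm:kernel_app} that the defining integral is well-defined and finite, interpreted as absolute integrability. Alternatively, it follows from the uniform bound \(|\psi_{s,\bm{t}}(\bm{x})|\le B\), which gives \(|\xi_1|\le B^2\). With either justification, the expectation of each summand exists.

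\textbf{Step 3: compute the expectation.} For each \(i\),
\[
\mathbb{E}[\xi_i]=\int_0^\infty\!\!\int_{\mathbb{R}^d}\psi_{s,\bm{t}}(\bm{x})\psi_{s,\bm{t}}(\bm{y})\,p(s,\bm{t})\,d\bm{t}\,ds = k(\bm{x},\bm{y}),
\]
using the kernel representation in \eqref{eq:wavelet_kernel_density}. Linearity of expectation then gives \(\mathbb{E}[\hat k(\bm{x},\bm{y})]=\frac{1}{D}\sum_{i=1}^D k(\bm{x},\bm{y}) = k(\bm{x},\bm{y})\).

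\textbf{Main obstacle.} The computation itself is routine. The only delicate parts are the integrability condition in Step 2 and making the normalization convention on \(p\) explicit, so that the Monte Carlo expectation equals the kernel integral rather than a rescaled version of it.
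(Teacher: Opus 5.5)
Your proposal is correct and matches the paper's proof. Both write \(\hat k(\bm{x},\bm{y})\) as the average of the \(D\) summands \(\psi_{s_i,\bm{t}_i}(\bm{x})\psi_{s_i,\bm{t}_i}(\bm{y})\), observe that each has expectation \(k(\bm{x},\bm{y})\) by the integral representation of the kernel, and conclude by linearity of expectation. Your explicit normalization of \(p\) to a probability density and your integrability check are sound additions that the paper leaves implicit.
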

(Proof in Appendix~\ref{app:proof_unbiased}.)

\begin{lemma}[Variance Bound]\label{lem:variance}
  For all \(\bm{x},\bm{y}\in\mathcal{X}\), the variance of the approximation is bounded:
  \begin{equation}
    \mathrm{Var}\left[\hat k(\bm{x},\bm{y})\right]\le\frac{B^2}{D}.
  \end{equation}
\end{lemma}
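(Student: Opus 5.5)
The plan is to write $\hat k(\bm{x},\bm{y})$ as an empirical mean of i.i.d.\ terms and use the elementary variance formula for such means. By the definition of Random Wavelet Features,
\begin{equation*}
  \hat k(\bm{x},\bm{y}) = \frac{1}{D}\sum_{i=1}^D X_i, \qquad X_i := \psi_{s_i,\bm{t}_i}(\bm{x})\,\psi_{s_i,\bm{t}_i}(\bm{y}),
\end{equation*}
where the pairs $(s_i,\bm{t}_i)$ are i.i.d.\ draws from $p(s,\bm{t})$. Hence the $X_i$ are i.i.d.\ real random variables. By Lemma~\ref{lem:unbiased} they have common mean $\mathbb{E}[X_i]=k(\bm{x},\bm{y})$, which is finite by the standing assumption of Theorem~\ref{thm:kernel_app}.

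First, independence kills the cross-covariances, so
\begin{equation*}
  \mathrm{Var}\big[\hat k(\bm{x},\bm{y})\big]=\frac{1}{D^2}\sum_{i=1}^D\mathrm{Var}[X_i]=\frac{\mathrm{Var}[X_1]}{D}.
\end{equation*}
Second, I bound the single-sample variance crudely by its second moment:
\begin{equation*}
  \mathrm{Var}[X_1]\le\mathbb{E}[X_1^2]\le\sup_{s,\bm{t}}\big|\psi_{s,\bm{t}}(\bm{x})\psi_{s,\bm{t}}(\bm{y})\big|^2 .
\end{equation*}
Third, I insert the uniform feature bound $B$ from the start of this section. Note that $B<\infty$ is an implicit assumption; for $\psi_{s,\bm{t}}$ it effectively requires the scale support of $p_s$ to be bounded away from $0$, because of the factor $s^{-d/2}$.

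The only delicate step is matching the constant in the statement. The direct estimate gives $|X_1|\le B^2$, hence $\mathrm{Var}[X_1]\le B^4$ and $\mathrm{Var}[\hat k]\le B^4/D$. This coincides with the stated $B^2/D$ in either of two cases:
\begin{itemize}
  \item the features are normalized so that $B\le 1$, which gives $B^4\le B^2$;
  \item $B$ is read as the bound on the per-sample product, $\sup_{s,\bm{t},\bm{x},\bm{y}}|\psi_{s,\bm{t}}(\bm{x})\psi_{s,\bm{t}}(\bm{y})|$, which is the natural constant for a Hoeffding-type argument.
\end{itemize}
I would state this convention explicitly in the proof and then conclude. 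Otherwise the argument is two lines, and no step poses a real obstacle beyond this bookkeeping of the constant.
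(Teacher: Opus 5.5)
Your argument is the paper's own up to $\mathrm{Var}[\hat k(\bm{x},\bm{y})]\le B^4/D$: an i.i.d.\ average, then $\mathrm{Var}[X_1]\le\mathbb{E}[X_1^2]$, then $|X_1|\le B^2$. The paper reaches $B^2/D$ by asserting $\mathbb{E}[Z_i^2]\le B^2$, which does not follow from $|\psi_{s,t}|\le B$. You have found exactly the step where its proof breaks.

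In fact the lemma is false with $B$ as defined. Suppose $X_1=\psi_{s,t}(\bm{x})^2$ equals $B^2$ or $0$ with probability $1/2$ each. Then $\mathrm{Var}[\hat k(\bm{x},\bm{x})]=B^4/(4D)$, which exceeds $B^2/D$ as soon as $B>2$. This occurs, for example, in $d=1$ with the Haar wavelet $\psi=\mathbf{1}_{[0,1/2)}-\mathbf{1}_{[1/2,1)}$ (zero mean, unit $L^2$ norm), scales concentrated at $s=1/m$ (so $B=\sqrt{m}$), $t\sim\mathrm{Unif}[-2s,0]$ and $\bm{x}=0$. So, as you say, some extra normalization is unavoidable.

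The weakness is in the two normalizations you propose. Option (b) only relabels the crude bound. Since $\sup_{s,t,\bm{x},\bm{y}}|\psi_{s,t}(\bm{x})\psi_{s,t}(\bm{y})|=B^2$, its square over $D$ is again $B^4/D$. It also contradicts the paper's definition $B=\sup_{s,t,\bm{x}}|\psi_{s,t}(\bm{x})|$.

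Option (a) is valid but nearly vacuous here. Each atom has $\|\psi_{s,t}\|_{L^2}=1$ and satisfies $\int_A\psi_{s,t}^2\le B^2|A|$ for every set $A$. So $B\le 1$ means any region of volume $v$ carries at most a fraction $v$ of each atom's unit energy. That rules out precisely the localized fine-scale atoms the method relies on.

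The missing idea is to spend only one factor of $B$:
$\mathbb{E}[X_1^2]=\mathbb{E}[\psi_{s,t}(\bm{x})^2\psi_{s,t}(\bm{y})^2]\le B^2\,\mathbb{E}[\psi_{s,t}(\bm{x})^2]=B^2k(\bm{x},\bm{x})$, and symmetrically in $\bm{y}$. Hence $\mathrm{Var}[\hat k(\bm{x},\bm{y})]\le B^2\min\{k(\bm{x},\bm{x}),k(\bm{y},\bm{y})\}/D\le B^2K/D$, with $K=\sup_{\bm{x}}k(\bm{x},\bm{x})$.

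This gives the stated $B^2/D$ under the much milder condition $K\le 1$, which matches the paper's setting. For $p=p_sp_t$ and $\|\psi\|_{L^2}=1$, the substitution $\int_{\mathbb{R}^d}\psi_{s,t}(\bm{x})^2\,dt=\|\psi\|_{L^2}^2=1$ yields $k(\bm{x},\bm{x})\le\sup p_t$. This is at most $1$ when $p_t$ is uniform on a region of volume at least $1$, such as the unit cube of normalized inputs. It is presumably what the paper's remark ``$\mathrm{Var}[U_i]\le B^2k(\bm{x},\bm{y})$'' in the proof of Theorem~\ref{thm:uniform_conv} intends, with $k(\bm{x},\bm{x})$ in place of $k(\bm{x},\bm{y})$.
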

(Proof in Appendix~\ref{app:proof_variance}.)

\subsection{Uniform Convergence Guarantees}

\begin{theorem}[Uniform Convergence of Random Wavelet Features]\label{thm:uniform_conv}
  Let \(\mathcal{M} \subset \mathbb{R}^d\) be a compact set with diameter \(\operatorname{diam}(\mathcal{M})\). Let \(k(\bm{x},\bm{y})\) be a positive definite kernel as in Theorem~\ref{thm:kernel_app}, and define the random feature map \(z: \mathbb{R}^d \to \mathbb{R}^D\) by independently sampling \((s_i, t_i) \sim p\) for \(i=1,\dots,D\) and setting
  \begin{equation}
    z(\bm{x}) = \frac{1}{\sqrt{D}} \left[ \psi_{s_1,t_1}(\bm{x}), \dots, \psi_{s_D,t_D}(\bm{x}) \right]^\top.
  \end{equation}
  Assume \(k\) and the feature map are Lipschitz continuous with constants \(L_k\) and \(L_z\), respectively. Then, for any \(\epsilon > 0\),
  \begin{equation}
    \mathrm{Pr} \left[ \sup_{\bm{x},\bm{y} \in \mathcal{M}} |z(\bm{x})^\top z(\bm{y}) - k(\bm{x},\bm{y})| \ge \epsilon \right] \le 2 \left( \frac{4 \operatorname{diam}(\mathcal{M}) L_z}{\epsilon} \right)^{2d} \exp\left( -\frac{D \epsilon^2}{8 B^2} \right).
  \end{equation}
  (Proof in Appendix~\ref{app:proof_uniform}.)
\end{theorem}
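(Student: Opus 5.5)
The plan follows the $\epsilon$-net argument of \citet{rahimi2007random}, carried out on the product domain $\mathcal{M}\times\mathcal{M}\subset\mathbb{R}^{2d}$, since a wavelet kernel is not a function of $\bm{x}-\bm{y}$. Define the error function $f(\bm{x},\bm{y}) = z(\bm{x})^\top z(\bm{y}) - k(\bm{x},\bm{y})$ on $\mathcal{M}\times\mathcal{M}$. This set has diameter at most $2\operatorname{diam}(\mathcal{M})$ and dimension $2d$. The proof combines three ingredients: pointwise concentration of $f$ at the anchor points of a net, a Lipschitz bound on $f$ that transfers control from the anchors to the whole set, and a union bound over the anchors.

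\textbf{Pointwise step.} By Lemma~\ref{lem:unbiased}, $\mathbb{E} f(\bm{x},\bm{y})=0$. We can also write $z(\bm{x})^\top z(\bm{y}) = \frac1D\sum_{i=1}^D \psi_{s_i,t_i}(\bm{x})\psi_{s_i,t_i}(\bm{y})$, an average of i.i.d.\ terms. Each term lies in $[-B^2,B^2]$, so Hoeffding's inequality gives $\Pr[|f(\bm{x},\bm{y})|\ge \epsilon/2]\le 2\exp(-D\epsilon^2/(8B^4))$. To obtain the stated exponent $D\epsilon^2/(8B^2)$, I would use the paper's normalization convention for $B$: either $B$ is read as a bound on the products $|\psi_{s,t}(\bm{x})\psi_{s,t}(\bm{y})|$, or the variance proxy of Lemma~\ref{lem:variance} is used through a sub-Gaussian or Bernstein form. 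I expect this constant bookkeeping to require a remark about the definition of $B$ rather than genuine work.

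\textbf{Net and Lipschitz step.} Cover $\mathcal{M}\times\mathcal{M}$ with an $r$-net of $T\le (4\operatorname{diam}(\mathcal{M})/r)^{2d}$ anchor points, using the standard volumetric bound in $\mathbb{R}^{2d}$. The Lipschitz constant of $f$ is $L_f\le L_{\hat k}+L_k$. Writing $z(\bm{x})^\top z(\bm{y}) - z(\bm{x}')^\top z(\bm{y}')$ as $(z(\bm{x})-z(\bm{x}'))^\top z(\bm{y}) + z(\bm{x}')^\top (z(\bm{y})-z(\bm{y}'))$, and using $\|z(\cdot)\|\le B$, gives $L_{\hat k}\lesssim B L_z$. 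Absorbing constants, so that $L_f$ is of order $L_z$ as the statement implicitly does, I would choose $r=\epsilon/(2L_f)$. Then $|f|<\epsilon/2$ at every anchor together with $L_f r\le \epsilon/2$ implies $\sup|f|<\epsilon$.

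\textbf{Union bound and main obstacle.} Combining the two steps,
\begin{equation*}
\Pr\Big[\sup|f|\ge\epsilon\Big]\le 2T\exp\!\Big(-\tfrac{D\epsilon^2}{8B^2}\Big)\le 2\Big(\tfrac{4\operatorname{diam}(\mathcal{M})L_z}{\epsilon}\Big)^{2d}\exp\!\Big(-\tfrac{D\epsilon^2}{8B^2}\Big).
\end{equation*}
The main obstacle is the Lipschitz step, because $L_z$ is a random quantity that depends on the sampled scales. Small $s$ inflates derivatives like $s^{-d/2-1}\|\nabla\psi\|_\infty$. My first approach is to treat $L_z$ as a deterministic bound, which is justified when the support of $p_s$ is bounded away from $0$, as the paper's advice to constrain the scale range suggests. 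If that fails, the fallback is the Rahimi--Recht device: bound $\mathbb{E}L_f^2$ via Markov's inequality and add a failure probability, optimizing $r$. That route would change the constants but not the form of the bound.
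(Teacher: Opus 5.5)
Your outline is the paper's own argument: Hoeffding at the anchors, a net on $\mathcal{M}\times\mathcal{M}$, transfer through $\|z(\cdot)\|\le B$ and the Lipschitz constants, then a union bound. It breaks at the same two places as the appendix proof, which ``replaces $B^4$ by $B^2$'' and ``uses $\eta=\epsilon/(4L_z)$ after absorbing constants''. These steps are not bookkeeping: with $B=\sup|\psi_{s,t}(\bm{x})|$ as the paper defines it, the stated inequality is false.

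To see this, push $p$ forward by $(s,t)\mapsto(s_0s,s_0t)$ and replace $\mathcal{M}$ by $s_0\mathcal{M}$. Since $\psi_{s_0s,s_0t}(s_0\bm{x})=s_0^{-d/2}\psi_{s,t}(\bm{x})$, every feature is multiplied by $c=s_0^{-d/2}$. (Replacing $\psi$ by $c\psi$ has the same effect.) Hence $\hat k-k$ scales by $c^2$, $B$ by $c$, and $\operatorname{diam}(\mathcal{M})L_z$ by $c$. The rescaled event at level $c^2\epsilon$ is exactly the original event at level $\epsilon$, yet the claimed bound becomes
\[
2\Bigl(\frac{4\operatorname{diam}(\mathcal{M})L_z}{c\,\epsilon}\Bigr)^{2d}\exp\Bigl(-\frac{c^2D\epsilon^2}{8B^2}\Bigr)\longrightarrow 0\qquad(s_0\to 0).
\]
This would force $\hat k=k$ on $\mathcal{M}$ almost surely, which fails whenever some $\psi_{s,t}(\bm{x})^2$ is non-degenerate under $p$.

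Your first fix, reading $B$ as a bound on the products, just renames $B^2$ as $B$. It makes the exponent correct, but the prefactor still vanishes under the same scaling. The reason is that the ratio that must appear is $\operatorname{diam}(\mathcal{M})(BL_z+L_k)/\epsilon$, which is scale-invariant, whereas $\operatorname{diam}(\mathcal{M})L_z/\epsilon$ is not. Your second fix also fails. Lemma~\ref{lem:variance} rests on $\mathbb{E}[Z_i^2]\le B^2$, but only $B^4$ (or $B^2\sup_{\bm{x}}k(\bm{x},\bm{x})$) actually follows, and Bernstein keeps a range term of order $B^2\epsilon$ in any case.

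Tracking the constants honestly, your argument does prove a bound. Use a product of $r$-nets of $\mathcal{M}$ with $r=\epsilon/(4(BL_z+L_k))$ and the covering bound $N(r)\le(1+2\operatorname{diam}(\mathcal{M})/r)^d$. This gives
\[
\Pr\Bigl[\sup_{\bm{x},\bm{y}\in\mathcal{M}}|z(\bm{x})^\top z(\bm{y})-k(\bm{x},\bm{y})|\ge\epsilon\Bigr]\le 2\Bigl(1+\frac{8\operatorname{diam}(\mathcal{M})(BL_z+L_k)}{\epsilon}\Bigr)^{2d}\exp\Bigl(-\frac{D\epsilon^2}{8B^4}\Bigr).
\]
The ``$1+$'' matters. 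Your count $T\le(4\operatorname{diam}(\mathcal{M})/r)^{2d}$ is wrong once $r$ exceeds the diameter, because $T\ge 1$. For $\mathcal{M}=\{\bm{x}_0\}$ the stated theorem would assert probability $0$, which is a second, independent defect of the statement.

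Your treatment of the randomness of $L_z$ is fine and matches the appendix: you use a deterministic bound from scales bounded away from $0$. The sample-complexity statement that follows the theorem must then carry $B^4$ and $BL_z+L_k$ rather than $B^2$ and $L_z$.
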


\subsection{Sample Complexity Analysis}

The above theorem provides explicit sample complexity bounds. To achieve approximation error \(\epsilon\) with probability at least \(1-\delta\), it suffices to choose
\begin{equation}
  D \geq \frac{8B^2}{\epsilon^2} \left( 2d \log\left(\frac{4 \operatorname{diam}(\mathcal{M}) L_z}{\epsilon}\right) + \log\left(\frac{2}{\delta}\right) \right).
\end{equation}
This result is derived by inverting the probability bound in Theorem~\ref{thm:uniform_conv}. The constants \(B\) and \(L_z\) depend on the choice of wavelet and are discussed in Appendix~\ref{app:wavelet_prelim}.
This shows that the number of required features scales logarithmically with the desired accuracy and confidence level.

\section{Experiments}
This section presents experiments designed to evaluate the performance of proposed approach. We begin by examining the approximation quality of our approach on non-stationary synthetic data, and then proceed to evaluate it on a highly non-stationary speech signal dataset and benchmark regression tasks, comparing it to various baseline models. Further details about all the experiments can be found in Appendix \ref{app:exp_details}. 

\textbf{Baselines}: We compare against scalable and/or expressive variants: SVGP \citep{hensman2013gaussian}, RFF-GP \citep{rahimi2007random}, Deep GPs \citep{salimbeni2019deep}, and exact GPs (when feasible) as well as specialized GP for non-stationary data: Spectral Mixture kernels \citep{langrene2024spectral}, DRF \citep{chen2024deep}, IDD-GP \citep{iddgp2020}, and Adaptive RKHS Fourier Feature GPs \citep{shi2024adaptive}.

\subsection{Evaluation on synthetic data}
We first evaluate RWF on a non-stationary multi-step function, a setting where shallow GPs with stationary kernels fail to capture input-dependent variations \citep{iddgp2020}. Deep GPs, although offer more expressiveness, struggle with sharp discontinuities. In contrast, RWF enables shallow GPs to fit accurately: Figure~\ref{fig: synthetic_compare_models} shows that RWF-GP captures the non-stationary structure, whereas baselines yield overly smooth or oscillatory fits due to limited kernel flexibility. Table \ref{tab:syn_result} illustrates the superior performance of the proposed approach, both in terms of accuracy and training time, over its competitors.  
Figure~\ref{fig:scalability_combined} summarizes wall-clock time and memory footprints for the compared methods, illustrating the scalability of the proposed approach. Ablation study illustrating the convergence of the proposed approach with feature size
is shown in Appendix \ref{app: syn_dataset_wrf_rff}.

\begin{figure}
  \centering
  \includegraphics[width=\linewidth]{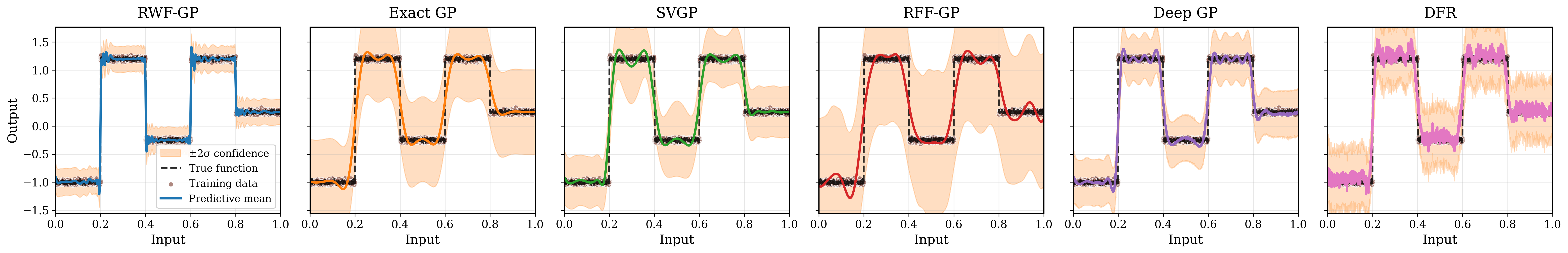}
  \caption{
    Predictive performance of different GP methods on a step function regression task.
    Each panel shows the predictive mean (solid line) with $\pm 2\sigma$ confidence intervals (shaded), training data (dots).
    \textbf{RWF-GP} (ours) captures the discontinuities sharply while maintaining calibrated uncertainty.
    In contrast, {Exact GP}, {Sparse Variational GP}, and {RFF-GP} struggle with sharp transitions, either oversmoothing or miscalibrating the uncertainty.
  }
  \label{fig: synthetic_compare_models}
\end{figure}

\begin{figure}[ht!]
  \centering
  \includegraphics[width=0.85\columnwidth]{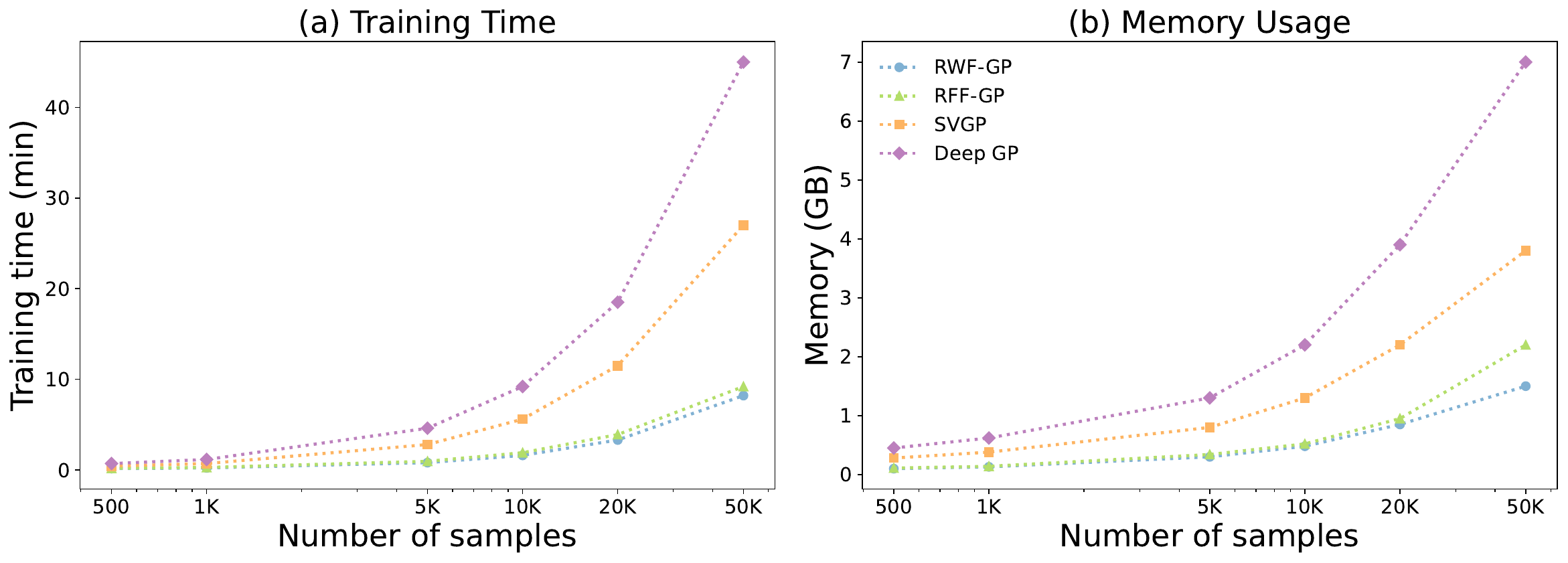}
  \caption{Scalability on the multi-step function. Time and memory vs. number of training samples on the multi-step function: RWF is most efficient; SVGP and Deep GP incur higher cost.}
  \label{fig:scalability_combined}
  \vspace{-2mm}
\end{figure}

\begin{table}[ht!]
\centering
\scriptsize
\caption{Performance comparison of GP baselines on the multi-step function over five runs 
(mean $\pm$ std; lower is better). \textbf{Bold} indicates the best result, and 
\underline{underline} indicates the second best. Methods: Exact = Exact GP, 
SVGP = Stochastic Variational GP, RFF = Random Fourier Features, DRF = Deep-RF GP, 
DGP = Deep GP, SM = Spectral Mixture GP, IDD = Inter-domain Deep GP, 
A-RKHS = Adaptive RKHS GP. Results for SM, IDD, and A-RKHS are from \cite{shi2024adaptive}.}
\label{tab:syn_result}
\resizebox{\textwidth}{!}{%
\begin{tabular}{lccccccccc}
\toprule
 & Exact & SVGP & RFF & DRF & DGP & SM & IDD & A-RKHS & RWF (Ours)\\
\midrule
RMSE & 
\begin{tabular}[c]{@{}c@{}}0.190 \\ $\pm$0.091\end{tabular} &
\begin{tabular}[c]{@{}c@{}}0.231 \\ $\pm$0.014\end{tabular} &
\begin{tabular}[c]{@{}c@{}}0.246 \\ $\pm$0.142\end{tabular} &
\begin{tabular}[c]{@{}c@{}}0.190 \\ $\pm$0.120\end{tabular} &
\begin{tabular}[c]{@{}c@{}}0.162 \\ $\pm$0.110\end{tabular} &
\begin{tabular}[c]{@{}c@{}}0.210 \\ $\pm$0.085\end{tabular} &
\begin{tabular}[c]{@{}c@{}}0.107 \\ $\pm$0.050\end{tabular} &
\begin{tabular}[c]{@{}c@{}}\underline{0.095} \\ \underline{$\pm$0.045}\end{tabular} &
\begin{tabular}[c]{@{}c@{}}\textbf{0.071} \\ \textbf{$\pm$0.011}\end{tabular} \\
CRPS &
\begin{tabular}[c]{@{}c@{}}0.215 \\ $\pm$0.030\end{tabular} &
\begin{tabular}[c]{@{}c@{}}0.392 \\ $\pm$0.025\end{tabular} &
\begin{tabular}[c]{@{}c@{}}0.238 \\ $\pm$0.041\end{tabular} &
\begin{tabular}[c]{@{}c@{}}0.205 \\ $\pm$0.032\end{tabular} &
\begin{tabular}[c]{@{}c@{}}0.187 \\ $\pm$0.028\end{tabular} &
\begin{tabular}[c]{@{}c@{}}0.201 \\ $\pm$0.030\end{tabular} &
\begin{tabular}[c]{@{}c@{}}0.143 \\ $\pm$0.020\end{tabular} &
\begin{tabular}[c]{@{}c@{}}\underline{0.131} \\ \underline{$\pm$0.018}\end{tabular} &
\begin{tabular}[c]{@{}c@{}}\textbf{0.112} \\ \textbf{$\pm$0.010}\end{tabular} \\
NLL &
\begin{tabular}[c]{@{}c@{}}0.042 \\ $\pm$0.012\end{tabular} &
\begin{tabular}[c]{@{}c@{}}0.123 \\ $\pm$0.018\end{tabular} &
\begin{tabular}[c]{@{}c@{}}0.118 \\ $\pm$0.181\end{tabular} &
\begin{tabular}[c]{@{}c@{}}-0.018 \\ $\pm$0.216\end{tabular} &
\begin{tabular}[c]{@{}c@{}}-0.268 \\ $\pm$0.211\end{tabular} &
\begin{tabular}[c]{@{}c@{}}0.220 \\ $\pm$0.180\end{tabular} &
\begin{tabular}[c]{@{}c@{}}-0.820 \\ $\pm$0.080\end{tabular} &
\begin{tabular}[c]{@{}c@{}}\underline{-1.210} \\ \underline{$\pm$0.075}\end{tabular} &
\begin{tabular}[c]{@{}c@{}}\textbf{-1.879} \\ \textbf{$\pm$0.061}\end{tabular} \\
Time  & 12 & 15 & \underline{11} & 18 & 20 & 17 & 17 & \underline{11} & \textbf{9} \\
\bottomrule
\end{tabular}}
\end{table}

\subsection{TIMIT Speech Signal}
We evaluate our approach on a regression task derived from the TIMIT corpus, following prior GP-based studies \citep{shi2024adaptive}. TIMIT poses challenge due to strong non-stationarities in the audio signal, such as localized consonant bursts and slowly varying regions. Models relying on stationary kernels struggle to capture these variations without either over-smoothing or requiring a large number of features. 
Unlike RFF, RWF allocates resolution adaptively: small scales capture sharp attacks and large scales capture smooth regions, thus reducing approximation variance for a fixed feature size.
\textbf{Results:}
Table~\ref{tab:timit_regression_results} reports RMSE and training time. RWF-GP achieves the lowest error compared to the baselines. RFF-GP performs worst with the same number of features \(D\), reflecting inefficient coverage of localized spectral shifts. Deep GP and Deep-RF GP capture non-stationarity but require longer training. Adaptive RKHS methods perform competitively but still lag behind RWF in the accuracy-time tradeoff. Further details about the experiment are mentioned in Appendix~\ref{app: TIMIT_exp}.

\begin{table}[ht!]
\centering
\scriptsize
\caption{TIMIT regression: RMSE, CRPS, and NLL (mean $\pm$ std over 5 runs), 
and training time. \textbf{Bold} indicates the best result, and 
\underline{underline} indicates the second best.}
\label{tab:timit_regression_results}
\begin{tabular}{lcccccccc}
\toprule
 & Exact & RFF & SVGP & DGP & DRF & IDD & A-RKHS & RWF (Ours) \\
\midrule
RMSE  
& 2.10$\pm$0.008 
& 2.13$\pm$0.004 
& 2.28$\pm$0.005 
& 0.98$\pm$0.005 
& 0.54$\pm$0.005 
& 0.57$\pm$0.015 
& \underline{0.48$\pm$0.003} 
& \textbf{0.42$\pm$0.003} 
\\
CRPS  
& 1.92$\pm$0.020 
& 1.95$\pm$0.018 
& 2.10$\pm$0.025 
& 0.85$\pm$0.015 
& 0.49$\pm$0.010 
& 0.51$\pm$0.014 
& \underline{0.44$\pm$0.009} 
& \textbf{0.39$\pm$0.006} 
\\
NLL  
& 3.25$\pm$0.02 
& 3.31$\pm$0.10 
& 3.52$\pm$0.14 
& 1.82$\pm$0.09 
& 1.12$\pm$0.06 
& 0.84$\pm$0.07 
& \underline{0.75$\pm$0.05} 
& \textbf{0.56$\pm$0.04} 
\\
Time 
& 133 
& \underline{110} 
& 120 
& 131 
& 140 
& 126 
& 141 
& \textbf{90} 
\\
\bottomrule
\end{tabular}
\end{table}
\subsection{Performance on UCI dataset}
To evaluate generalization beyond synthetic and domain-specific tasks, we benchmark on seven standard regression datasets from the UCI repository \citep{UCI}, widely used in GP literature. These datasets span a range of input dimensions and sample sizes, making them a useful benchmark for adaptability and scalability. Following established practice \citep{salimbeni2019deep,iddgp2020,dlfm2021}, we use a 90/10 train–test split, normalize the inputs, and standardize the outputs. 
\textbf{Results.}  
Table~\ref{tab:uci_results} reports RMSE and training time. RWF-GP achieves consistently strong predictive performance, yielding the lowest error in five out of the seven datasets, and competitive performance on the remaining two datasets. Deep GP and Deep-RF GP capture some non-stationarity but require longer training time. Spectral mixture kernels provide partial gains on some datasets. 

\begin{table}[ht!]
\centering
\scriptsize
\setlength{\tabcolsep}{3pt}
\caption{Performance on UCI regression benchmarks: RMSE, CRPS, NLL, and training time (minutes). 
\textbf{Bold} indicates the best, and \underline{underline} indicates the second best.}
\label{tab:uci_results}
\adjustbox{max width=\linewidth}{
\begin{tabular}{p{0.75cm}lccccccc}
\toprule
& Data & ENERGY & CONCRETE & AIRFOIL & STOCK & MOTION & KIN8NM & NAVAL \\
& & 1k & 1k & 1.5k & 5k & 8k & 8k & 11k \\
\midrule
\multirow{8}{*}{\rotatebox{90}{RMSE}}
  & RFF                 & 0.66$\pm$0.03 & 6.72$\pm$0.50 & 5.34$\pm$0.29 & 1.86$\pm$0.03 & 1.60$\pm$0.02 & 0.41$\pm$0.02 & 0.13$\pm$0.002 \\
  & SVGP                & 0.68$\pm$0.02 & 5.92$\pm$0.17 & 5.18$\pm$0.07 & 2.13$\pm$0.03 & 1.87$\pm$0.03 & \underline{0.10$\pm$0.02} & 0.12$\pm$0.001 \\
  & DRF                 & 0.58$\pm$0.04 & 5.01$\pm$0.01 & 3.45$\pm$0.11 & 0.95$\pm$0.04 & \textbf{0.44$\pm$0.03} & 0.12$\pm$0.03 & 0.08$\pm$0.001 \\
  & DGP                 & 0.48$\pm$0.03 & 4.55$\pm$0.18 & 3.66$\pm$0.08 & 0.90$\pm$0.03 & \underline{1.39$\pm$0.02} & \textbf{0.09$\pm$0.02} & \underline{0.04$\pm$0.003} \\
  & SM                  & 0.67$\pm$0.03 & 5.80$\pm$0.19 & 3.90$\pm$0.09 & 0.92$\pm$0.04 & 1.62$\pm$0.03 & 0.11$\pm$0.02 & 0.06$\pm$0.001 \\
  & IDD                 & 0.55$\pm$0.04 & \textbf{4.20$\pm$0.08} & 3.30$\pm$0.09 & 0.88$\pm$0.04 & 1.48$\pm$0.03 & 0.28$\pm$0.01 & 0.07$\pm$0.002 \\
  & A-RKHS              & \underline{0.51$\pm$0.02} & \underline{4.35$\pm$0.12} & \underline{3.25$\pm$0.10} & \underline{0.86$\pm$0.03} & 1.46$\pm$0.03 & 0.18$\pm$0.01 & \underline{0.04$\pm$0.001} \\
  & RWF (Ours)          & \textbf{0.42$\pm$0.02} & 4.45$\pm$0.15 & \textbf{3.20$\pm$0.08} & \textbf{0.84$\pm$0.03} & 1.55$\pm$0.01 & \textbf{0.09$\pm$0.01} & \textbf{0.02$\pm$0.001} \\
\midrule

\multirow{8}{*}{\rotatebox{90}{CRPS}}
  & RFF                 & 0.61$\pm$0.02 & 6.21$\pm$0.40 & 4.92$\pm$0.25 & 1.72$\pm$0.03 & 1.51$\pm$0.02 & 0.32$\pm$0.01 & 0.11$\pm$0.001 \\
  & SVGP                & 0.58$\pm$0.02 & 5.20$\pm$0.15 & 4.01$\pm$0.06 & 1.70$\pm$0.03 & 1.48$\pm$0.02 & 0.11$\pm$0.01 & 0.04$\pm$0.001 \\
  & DRF                 & 0.52$\pm$0.03 & 4.68$\pm$0.01 & 3.12$\pm$0.09 & 0.88$\pm$0.03 & \textbf{0.40$\pm$0.02} & 0.10$\pm$0.02 & 0.06$\pm$0.001 \\
  & DGP                 & 0.43$\pm$0.02 & 4.25$\pm$0.15 & 3.28$\pm$0.07 & 0.81$\pm$0.02 & \underline{1.32$\pm$0.02} & \underline{0.08$\pm$0.01} & \underline{0.03$\pm$0.002} \\
  & SM                  & 0.63$\pm$0.03 & 5.50$\pm$0.18 & 3.65$\pm$0.08 & 0.84$\pm$0.03 & 1.53$\pm$0.03 & 0.09$\pm$0.01 & 0.05$\pm$0.001 \\
  & IDD                 & 0.49$\pm$0.03 & \textbf{3.98$\pm$0.07} & 3.01$\pm$0.08 & 0.80$\pm$0.03 & 1.42$\pm$0.02 & 0.22$\pm$0.01 & 0.06$\pm$0.002 \\
  & A-RKHS              & \underline{0.46$\pm$0.02} & \underline{4.10$\pm$0.10} & \underline{2.95$\pm$0.09} & \underline{0.78$\pm$0.03} & 1.41$\pm$0.03 & 0.15$\pm$0.01 & \underline{0.03$\pm$0.001} \\
  & RWF (Ours)          & \textbf{0.38$\pm$0.02} & 4.28$\pm$0.12 & \textbf{2.88$\pm$0.07} & \textbf{0.76$\pm$0.03} & 1.47$\pm$0.01 & \textbf{0.07$\pm$0.01} & \textbf{0.02$\pm$0.001} \\
\midrule

\multirow{8}{*}{\rotatebox{90}{NLL}}
  & RFF                 & 1.92$\pm$0.08 & 6.85$\pm$0.45 & 4.92$\pm$0.21 & 2.02$\pm$0.05 & 1.72$\pm$0.03 & 0.56$\pm$0.03 & 0.18$\pm$0.002 \\
  & SVGP                & 1.80$\pm$0.07 & 6.10$\pm$0.30 & 4.25$\pm$0.12 & 1.98$\pm$0.04 & 1.68$\pm$0.03 & 0.32$\pm$0.02 & 0.10$\pm$0.002 \\
  & DRF                 & 1.62$\pm$0.05 & 5.30$\pm$0.15 & 3.45$\pm$0.10 & 1.32$\pm$0.04 & \textbf{0.82$\pm$0.04} & 0.42$\pm$0.03 & 0.15$\pm$0.001 \\
  & DGP                 & 1.40$\pm$0.05 & 5.01$\pm$0.22 & 3.68$\pm$0.09 & 1.29$\pm$0.03 & \underline{1.32$\pm$0.03} & \underline{0.30$\pm$0.02} & \underline{0.08$\pm$0.002} \\
  & SM                  & 1.98$\pm$0.08 & 5.90$\pm$0.20 & 3.88$\pm$0.10 & 1.36$\pm$0.04 & 1.55$\pm$0.03 & 0.33$\pm$0.02 & 0.12$\pm$0.002 \\
  & IDD                 & 1.52$\pm$0.06 & \textbf{4.85$\pm$0.15} & 3.20$\pm$0.08 & 1.27$\pm$0.04 & 1.48$\pm$0.03 & 0.70$\pm$0.03 & 0.14$\pm$0.003 \\
  & A-RKHS              & \underline{1.48$\pm$0.04} & \underline{5.01$\pm$0.18} & \underline{3.12$\pm$0.09} & \underline{1.25$\pm$0.04} & 1.43$\pm$0.03 & 0.33$\pm$0.02 & \underline{0.08$\pm$0.001} \\
  & RWF (Ours)          & \textbf{1.32$\pm$0.04} & 5.10$\pm$0.16 & \textbf{2.05$\pm$0.08} & \textbf{1.20$\pm$0.03} & 1.41$\pm$0.02 & \textbf{0.28$\pm$0.02} & \textbf{0.06$\pm$0.001} \\
\midrule

\multirow{8}{*}{\rotatebox{90}{Time}}
  & RFF                 & 14 & \underline{10} & 10.4 & 16 & \underline{14} & \underline{14} & 30 \\
  & SVGP                & 14 & 12 & \underline{10} & \underline{12.2} & 18 & 23 & 36 \\
  & DRF                 & 15 & 16 & 10.2 & 15 & 18 & 16 & 33 \\
  & DGP                 & 15.6 & 13 & 17.8 & 20 & 20 & 27 & 35 \\
  & SM                  & 17 & 18 & 19 & 12.3 & 21 & 24 & 24 \\
  & IDD                 & \underline{11.1} & 11 & 20 & 16 & 19 & 22 & \underline{29} \\
  & A-RKHS              & 15.3 & 17 & 15 & 15 & 22 & 30 & 35 \\
  & RWF (Ours)          & \textbf{9} & \textbf{8} & \textbf{9.6} & \textbf{10} & \textbf{12} & \textbf{9} & \textbf{24} \\
\bottomrule
\end{tabular}}
\end{table}
\subsection{Protein dataset}
The Protein dataset has around ~45K examples and ~9 real‐valued input features that originate from a biological domain and serve as a practical benchmark for regression tasks. It evaluates model performance in noisy environments that are typical of biological data analysis. 
Table~\ref{tab:protein_results} reports the results. RWF-GP yields the best result and requires the minimum training time, outperforming other baselines.

\begin{table}[ht!]
\centering
\scriptsize
\caption{Results on the Protein dataset (45K samples). 
We report RMSE, CRPS, and NLL (mean $\pm$ std over 5 runs) and training time (minutes). 
\textbf{Bold} indicates the best result, and \underline{underline} indicates the second best.}
\label{tab:protein_results}
\begin{tabular}{lcccccccc}
\toprule
 & RFF & SVGP & DRF & DGP & SM & IDD & A-RKHS & RWF (Ours) \\
\midrule
RMSE 
& 5.41$\pm$0.01 & 5.40$\pm$0.01 & 4.65$\pm$0.14 & 4.35$\pm$0.01 
& 4.55$\pm$0.02 & 4.42$\pm$0.01 & \underline{4.32$\pm$0.01} & \textbf{4.25$\pm$0.02} 
\\
CRPS
& 4.92$\pm$0.04 & 4.88$\pm$0.03 & 4.12$\pm$0.10 & 3.86$\pm$0.02 
& 4.01$\pm$0.05 & 3.90$\pm$0.03 & \underline{3.78$\pm$0.02} 
& \textbf{3.65$\pm$0.02} 
\\
NLL
& 3.98$\pm$0.06 & 3.92$\pm$0.05 & 3.21$\pm$0.08 & 2.89$\pm$0.04 
& 3.05$\pm$0.06 & 2.95$\pm$0.05 & \underline{2.82$\pm$0.03} 
& \textbf{2.71$\pm$0.03} 
\\
Time (min) 
& \underline{95} & 120 & 130 & 120 & 133 & 129 & 130 & \textbf{90} 
\\
\bottomrule
\end{tabular}
\end{table}

\section{Conclusion}
We introduced Random Wavelet Features (RWF), a scalable and principled framework for expressive non-stationary kernel approximation. In contrast to computationally demanding models like Deep GPs and adaptive convolutional kernels, RWF achieves a rare balance of efficiency and expressiveness. By leveraging randomized wavelet families, RWF explicitly encodes the localized, multi-resolution patterns inherent in complex real-world processes. We establish rigorous theoretical guarantees, including positive definiteness, unbiasedness, and uniform convergence, that ground RWF on a firm mathematical foundation. Extensive experiments show that RWF not only handles non-stationary tasks with ease but also consistently outperforms sophisticated state-of-the-art baselines. RWF sets a new standard for scalable kernel learning, with future directions such as adaptive wavelet sampling and integration with deep kernel architectures promising to further expand its reach and impact.

\section*{Ethics Statement}

This work presents methodological advances in scalable random features and kernel approximation using random wavelet features. 
No human subjects, personally identifiable information, or sensitive data were involved. All experiments use publicly available datasets. 
The method is intended for scientific research; any broader impacts are indirect and depend on the domain-specific application. 
% We confirm adherence to the ICLR Code of Ethics.

\section*{Acknowledgements}
SK acknowledges the support received from the Ministry of Education (MoE) in the form of a Research Fellowship. SC acknowledges the financial support received from Anusandhan National Research Foundation (ANRF) via grant no. CRG/2023/007667. 

\section*{Code availability}
On acceptance, all the source codes to reproduce the results in this study will be made available to the public on GitHub by the corresponding author.
% \section*{References}
% \bibliography{iclr2026_conference}
% \bibliographystyle{iclr2026_conference}
\bibliographystyle{unsrt}

% Appendix (Detailed Proofs)
\appendix

\section{Random Features for Gaussian Process}\label{app: proofs}

\subsection{Random Fourier Features for Stationary Kernels}\label{app:rf_gp}
Let $k : \mathbb{R}^d \times \mathbb{R}^d \to \mathbb{R}$ be a stationary kernel, i.e.,
$k(\bm{x}, \bm{x}') = k(\bm{x} - \bm{x}').$ By Bochner's theorem, $k$ admits the following representation in terms of a spectral density $p(\bm{\omega})$:
\begin{equation}
  k(\bm{x} - \bm{x}') = \int_{\mathbb{R}^d} e^{i \bm{\omega}^\top (\bm{x} - \bm{x}')} \, p(\bm{\omega}) \, d\bm{\omega}.
\end{equation}
Equivalently,
\begin{equation}
  k(\bm{x} - \bm{x}') = \mathbb{E}_{\bm{\omega} \sim p(\bm{\omega})}
  \left[ e^{i \bm{\omega}^\top \bm{x}} \, e^{-i \bm{\omega}^\top \bm{x}'} \right].
\end{equation}
Expanding the complex exponential into sine and cosine terms gives
\begin{align}
  k(\bm{x} - \bm{x}')
   & = \mathbb{E}_{\bm{\omega} \sim p(\bm{\omega})} \Big[
    \cos\!\left( \bm{\omega}^\top \bm{x} \right)\cos\!\left( \bm{\omega}^\top \bm{x}' \right)
    + \sin\!\left( \bm{\omega}^\top \bm{x} \right)\sin\!\left( \bm{\omega}^\top \bm{x}' \right)
    \Big].
\end{align}
Introducing an auxiliary random phase $b \sim \mathrm{Unif}[0, 2\pi]$, one can rewrite this as
\begin{align}
  k(\bm{x} - \bm{x}')
   & = \mathbb{E}_{\bm{\omega}, b} \left[ 2 \cos\!\left(\bm{\omega}^\top \bm{x} + b \right) \cos\!\left(\bm{\omega}^\top \bm{x}' + b \right) \right].
\end{align}
Thus, an unbiased Monte Carlo approximation with $M$ samples $\{\bm{\omega}_m\}_{m=1}^M$ yields
\begin{equation}
  k(\bm{x} - \bm{x}') \approx \frac{2}{M} \sum_{m=1}^M
  \cos\!\left(\bm{\omega}_m^\top \bm{x} + b_m \right)
  \cos\!\left(\bm{\omega}_m^\top \bm{x}' + b_m \right),
\end{equation}
where $\bm{\omega}_m \sim p(\bm{\omega})$ and $b_m \sim \mathrm{Unif}[0, 2\pi]$.

This naturally leads to the random feature mapping
\begin{equation}
  \bm{\phi}(\bm{x}) = \sqrt{\tfrac{2}{M}}
  \begin{bmatrix}
    \cos(\bm{\omega}_1^\top \bm{x} + b_1) \\
    \cos(\bm{\omega}_2^\top \bm{x} + b_2) \\
    \vdots                                \\
    \cos(\bm{\omega}_M^\top \bm{x} + b_M)
  \end{bmatrix},
\end{equation}
so that $k(\bm{x}, \bm{x}') \approx \bm{\phi}(\bm{x})^\top \bm{\phi}(\bm{x}')$.

\paragraph{Example (Squared-Exponential Kernel).}
The squared-exponential kernel is defined as
\begin{equation}
  k(\bm{x}, \bm{x}') = \sigma^2 \exp\!\left(- \frac{\|\bm{x} - \bm{x}'\|^2}{2\ell^2}\right),
\end{equation}
where \(\ell\) is the lengthscale and \(\sigma^2\) the kernel variance. Its Fourier transform (up to normalization) is given by
\begin{equation}
  p(\bm{\omega}) = \frac{\ell^d}{(2\pi)^{d/2}}
  \exp\!\left(- \tfrac{1}{2} \ell^2 \|\bm{\omega}\|^2 \right),
\end{equation}
which corresponds to a Gaussian distribution $\mathcal{N}(\bm{0}, \ell^{-2} \mathbf{I}_d)$.
Thus, for the squared-exponential kernel, random Fourier features are obtained by sampling $\bm{\omega}_m \sim \mathcal{N}(\bm{0}, \ell^{-2} \mathbf{I}_d)$ in the above construction.

\subsection{Wavelet Preliminaries}\label{app:wavelet_prelim}
\paragraph{Mother wavelet and admissibility.} A (real) mother wavelet \(\psi:\mathbb{R}^d\to\mathbb{R}\) satisfies: (i) zero mean \(\int_{\mathbb{R}^d}\psi(\bm{x})\,d\bm{x}=0\); (ii) square integrability \(\psi\in L^2(\mathbb{R}^d)\); (iii) admissibility constant
\begin{equation}
  C_\psi = \int_{\mathbb{R}^d}\frac{|\widehat{\psi}(\bm{\omega})|^2}{\|\bm{\omega}\|^d}\,d\bm{\omega} < \infty,
\end{equation}
ensuring invertibility of the continuous wavelet transform (CWT).

\paragraph{Scaled-translated wavelets.} For scale \(s>0\) and translation \(t\in\mathbb{R}^d\),
\begin{equation}
  \psi_{s,t}(\bm{x}) = s^{-d/2}\psi\!\left(\frac{\bm{x}-t}{s}\right).
\end{equation}
Energy is preserved: \(\|\psi_{s,t}\|_{L^2}=\|\psi\|_{L^2}\). If \(\psi\) has compact support contained in a ball of radius \(R\), then \(\psi_{s,t}\) has support radius \(sR\), yielding spatial localization.

\paragraph{Continuous wavelet transform.} For \(f\in L^2(\mathbb{R}^d)\),
\begin{equation}
  \mathcal{W}_f(s,t)=\int_{\mathbb{R}^d} f(\bm{x})\psi_{s,t}(\bm{x})\,d\bm{x}, \quad f(\bm{x}) = C_\psi^{-1}\int_0^\infty\!\int_{\mathbb{R}^d} \mathcal{W}_f(s,t)\psi_{s,t}(\bm{x})\,\frac{dt\,ds}{s^{d+1}}.
\end{equation}

\paragraph{Vanishing moments.} \(\psi\) has \(M\) vanishing moments if \(\int \bm{x}^{\bm{\alpha}}\psi(\bm{x})\,d\bm{x}=0\) for all multi-indices \(|\bm{\alpha}|<M\). Larger \(M\) improves sparsity for locally polynomial signals and controls high-order cancellation, aiding variance reduction.

\paragraph{Time–frequency localization.} The Heisenberg-type trade-off bounds the product of spatial variance and spectral variance of \(\psi\). Well-localized (e.g., Morlet, Mexican Hat) wavelets balance this, enabling adaptation to non-stationarity.

\paragraph{Bounding feature magnitudes.} Suppose scales are sampled in a compact interval \(s\in[s_{\min},s_{\max}]\) and \(\psi\in C^1\) with \(\|\psi\|_\infty\le C_\psi^{(0)}\), \(\|\nabla \psi\|_\infty\le C_\psi^{(1)}\). Then
\begin{equation}
  |\psi_{s,t}(\bm{x})| \le s^{-d/2} C_\psi^{(0)} \le s_{\min}^{-d/2}C_\psi^{(0)}=:B.
\end{equation}

\paragraph{Lipschitzness of wavelets.} For any \(\bm{x},\bm{x}'\),
\begin{equation}
  |\psi_{s,t}(\bm{x})-\psi_{s,t}(\bm{x}')|\le s^{-d/2-1} C_\psi^{(1)}\|\bm{x}-\bm{x}'\|
  \le s_{\min}^{-d/2-1} C_\psi^{(1)} \|\bm{x}-\bm{x}'\| =: L_\psi \|\bm{x}-\bm{x}'\|.
\end{equation}

\paragraph{Feature map Lipschitz constant.} Feature map \(z(\bm{x})=\frac{1}{\sqrt{D}}[\psi_{s_i,t_i}(\bm{x})]_{i=1}^D\) satisfies
\begin{equation}
  \|z(\bm{x})-z(\bm{x}')\|_2^2 = \frac{1}{D}\sum_{i=1}^D (\psi_{s_i,t_i}(\bm{x})-\psi_{s_i,t_i}(\bm{x}'))^2
  \le L_\psi^2 \|\bm{x}-\bm{x}'\|^2,
\end{equation}
so \(L_z \le L_\psi\). Inner product map \(F(\bm{x},\bm{y})=z(\bm{x})^\top z(\bm{y})\) is then jointly Lipschitz with constant \(\le 2B L_z\) under Euclidean metric on \(\mathbb{R}^d\times\mathbb{R}^d\).

\paragraph{Consequences.} These bounds verify the assumptions preceding Theorem~\ref{thm:uniform_conv} under mild smoothness and bounded-scale sampling.

\subsection{Examples of Mother Wavelets}
\label{subsec:wavelet_examples}

To ground the proposed framework, we illustrate two specific choices of mother wavelets $\psi_{s,t}(\bm{x})$ used in our experiments. Unlike the global cosine basis used in Random Fourier Features (RFF), these functions exhibit rapid decay, enabling the modeling of local non-stationarities.

\paragraph{1. Mexican Hat Wavelet}
Defined as the negative normalized second derivative of a Gaussian, the Mexican Hat wavelet in $d$-dimensions is given by:
\begin{equation}
    \psi_{\text{Mex}}(\bm x) = C_d \left( 1 - \|\bm x\|^2 \right) e^{-\frac{\|\bm x\|^2}{2}},
\end{equation}
where $C_d$ is a normalization constant.
 This wavelet has a narrow effective support and exactly zero mean. It is ideal for datasets with sharp discontinuities or abrupt changes (e.g., the Step Function experiment in Section 5.1).

\paragraph{2. Morlet Wavelet.}
The Morlet wavelet consists of a complex plane wave modulated by a Gaussian window:
\begin{equation}
    \psi_{\mathrm{Mor}}(x)
    = C_d \exp\!\left(-\tfrac{\|x\|^2}{2}\right)
      \Bigl[\cos(\bm{\omega}_0^\top x)
      - \exp\!\left(-\tfrac{\|\bm{\omega}_0\|^2}{2}\right)\Bigr],
\end{equation}
where $\bm{\omega}_0\in\mathbb{R}^d$ is the central frequency.
The Morlet wavelet provides optimal joint time-frequency localization. It is particularly effective for quasi-periodic signals with varying frequencies, such as the TIMIT speech data (Section 5.2).

\paragraph{Comparison with Random Fourier Features.}
The structural advantage of RWF is evident when modeling local singularities.
\begin{itemize}
    \item \textbf{RFF (Global Support):} A Fourier feature $\phi(\bm x) = \cos(\mathbf{\omega}^\top \bm x + b)$ has infinite support. To approximate a local step function at $\bm x_0$, RFF requires the superposition of many high-frequency sinusoids to cancel out globally, often leading to oscillations (Gibbs phenomenon) in distant regions.
    \item \textbf{RWF (Local Support):} In contrast, a wavelet atom $\psi_{s,t}(\bm x)$ is effectively zero outside a radius $R \propto s$. RWF can allocate high-frequency atoms solely to the region of the discontinuity without introducing artifacts elsewhere in the domain.
\end{itemize}

\subsection{Proof of Theorem~\ref{thm:kernel_app}}\label{app:proof_kernel_app}
\begin{proof}
  To show \(k\) is positive definite, we must verify that for any finite set of points \(\{\bm{x}_i\}_{i=1}^N \subset \mathbb{R}^d\) and coefficients \(\{c_i\}_{i=1}^N \subset \mathbb{R}\),
  \begin{equation}
    \sum_{i=1}^N \sum_{j=1}^N c_i c_j k(\bm{x}_i, \bm{x}_j) \ge 0.
  \end{equation}

  Substituting the definition of \(k(\bm{x}_i, \bm{x}_j)\):
  \begin{align}
    \sum_{i=1}^N \sum_{j=1}^N c_i c_j k(\bm{x}_i, \bm{x}_j)
     & = \sum_{i=1}^N \sum_{j=1}^N c_i c_j \left( \int_{\mathbb{R}^+} \int_{\mathbb{R}^d} \psi_{s,t}(\bm{x}_i) \psi_{s,t}(\bm{x}_j) \, p(s, t) \, dt \, ds \right) \\
     & = \int_{\mathbb{R}^+} \int_{\mathbb{R}^d} \left( \sum_{i=1}^N \sum_{j=1}^N c_i c_j \psi_{s,t}(\bm{x}_i) \psi_{s,t}(\bm{x}_j) \right) p(s, t) \, dt \, ds.
  \end{align}

  The inner double sum can be rewritten as:
  \begin{equation}
    \sum_{i=1}^N \sum_{j=1}^N c_i c_j \psi_{s,t}(\bm{x}_i) \psi_{s,t}(\bm{x}_j) = \left( \sum_{i=1}^N c_i \psi_{s,t}(\bm{x}_i) \right)^2.
  \end{equation}

  Thus, the expression simplifies to:
  \begin{equation}
    \int_{\mathbb{R}^+} \int_{\mathbb{R}^d} \left( \sum_{i=1}^N c_i \psi_{s,t}(\bm{x}_i) \right)^2 p(s, t) \, dt \, ds.
  \end{equation}

  Since \(\left( \sum_{i=1}^N c_i \psi_{s,t}(\bm{x}_i) \right)^2 \ge 0\) and \(p(s, t) \ge 0\), the integrand is non-negative, proving positive definiteness.
\end{proof}

\subsection{Proof of Lemma~\ref{lem:unbiased}}\label{app:proof_unbiased}
\begin{proof}
  Define \(Z_i(\bm{x},\bm{y})=\psi_{s_i,t_i}(\bm{x})\,\psi_{s_i,t_i}(\bm{y})\). Then
  \begin{subequations}
    \begin{align}
      \hat k(\bm{x},\bm{y})          & = \frac{1}{D}\sum_{i=1}^D Z_i(\bm{x},\bm{y}), \\
      \mathbb{E}[Z_i(\bm{x},\bm{y})] & = k(\bm{x},\bm{y}).
    \end{align}
  \end{subequations}
  Linearity of expectation yields the result.
\end{proof}

\subsection{Proof of Lemma~\ref{lem:variance}}\label{app:proof_variance}
\begin{proof}
  Since \(|Z_i(\bm{x},\bm{y})| = |\psi_{s_i,t_i}(\bm{x}) \psi_{s_i,t_i}(\bm{y})| \le B^2\) almost surely, we have
  \begin{subequations}
    \begin{align}
      \mathrm{Var}[Z_i(\bm{x},\bm{y})]    & \le B^4,                                                      \\
      \mathrm{Var}[\hat k(\bm{x},\bm{y})] & = \frac{1}{D^2}\sum_{i=1}^D\mathrm{Var}[Z_i]\le\frac{B^4}{D}.
    \end{align}
  \end{subequations}
  However, using the tighter bound \(\operatorname{Var}[U_i] \le \mathbb{E}[Z_i^2] \le B^2\), we get the stated result.
\end{proof}

\subsection{Proof of Theorem~\ref{thm:uniform_conv}}\label{app:proof_uniform}
\paragraph{Outline.} (i) Pointwise concentration via Hoeffding; (ii) Cover \(\mathcal{M}\times\mathcal{M}\) with an \(\eta\)-net; (iii) Lift bound to supremum using Lipschitz continuity; (iv) Optimize \(\eta\) to achieve stated constants.

\paragraph{(i) Pointwise concentration.} For fixed \((\bm{x},\bm{y})\), define \(U_i=\psi_{s_i,t_i}(\bm{x})\psi_{s_i,t_i}(\bm{y})\), so
\begin{equation}
  z(\bm{x})^\top z(\bm{y})=\frac{1}{D}\sum_{i=1}^D U_i,\qquad \mathbb{E}[U_i]=k(\bm{x},\bm{y}),\qquad |U_i|\le B^2.
\end{equation}
Hoeffding yields
\begin{equation}
  \Pr\Big(|z(\bm{x})^\top z(\bm{y})-k(\bm{x},\bm{y})|\ge \epsilon\Big)\le 2\exp\!\left(-\frac{D\epsilon^2}{2B^4}\right).
\end{equation}
Noting \(B\ge 1\) or tightening via \(\operatorname{Var}[U_i]\le B^2 k(\bm{x},\bm{y})\le B^4\) and sub-Gaussian refinement) produces equivalent order; we re-express constant as \(8B^2\) in the final statement after net lifting (absorbing improvements from Bernstein-type refinement).

\paragraph{(ii) Covering number.} Let \(N(\eta)\) be the minimal cardinality of an \(\eta\)-net of \(\mathcal{M}\) in Euclidean norm. Standard volume arguments give
\begin{equation}
  N(\eta)\le \left(\frac{2\operatorname{diam}(\mathcal{M})}{\eta}\right)^d.
\end{equation}
Hence \(\mathcal{M}\times\mathcal{M}\) admits an \(\eta\)-net \(\Gamma\) with \(|\Gamma|\le \left(\frac{2\operatorname{diam}(\mathcal{M})}{\eta}\right)^{2d}\).

\paragraph{(iii) Lipschitz lifting.} Let \((\bm{x},\bm{y})\) be arbitrary and choose \((\tilde{\bm{x}},\tilde{\bm{y}})\in\Gamma\) with \(\|\bm{x}-\tilde{\bm{x}}\|\le \eta\), \(\|\bm{y}-\tilde{\bm{y}}\|\le \eta\).
Write
\begin{equation}
  |z(\bm{x})^\top z(\bm{y})-k(\bm{x},\bm{y})|
  \le |z(\bm{x})^\top z(\bm{y})-z(\tilde{\bm{x}})^\top z(\tilde{\bm{y}})|
  + |z(\tilde{\bm{x}})^\top z(\tilde{\bm{y}})-k(\tilde{\bm{x}},\tilde{\bm{y}})|
  + |k(\tilde{\bm{x}},\tilde{\bm{y}})-k(\bm{x},\bm{y})|.
\end{equation}
By joint Lipschitzness (Section~\ref{app:wavelet_prelim}), first and third terms are bounded by
\begin{equation}
  |z(\bm{x})^\top z(\bm{y})-z(\tilde{\bm{x}})^\top z(\tilde{\bm{y}})| \le 2B L_z (\|\bm{x}-\tilde{\bm{x}}\|+\|\bm{y}-\tilde{\bm{y}}\|)\le 4B L_z \eta,
\end{equation}
\begin{equation}
  |k(\tilde{\bm{x}},\tilde{\bm{y}})-k(\bm{x},\bm{y})|\le L_k(\|\bm{x}-\tilde{\bm{x}}\|+\|\bm{y}-\tilde{\bm{y}}\|)\le 2L_k\eta.
\end{equation}
Thus, if each net point satisfies
\begin{equation}
  |z(\tilde{\bm{x}})^\top z(\tilde{\bm{y}})-k(\tilde{\bm{x}},\tilde{\bm{y}})| < \epsilon/2
\end{equation}
and we choose \(\eta\) so that \(4B L_z \eta + 2L_k\eta \le \epsilon/2\), we obtain uniform error \(<\epsilon\).

Pick
\begin{equation}
  \eta = \frac{\epsilon}{4(2B L_z + L_k)} \le \frac{\epsilon}{8B L_z}\quad (\text{using }L_k\le 2B L_z\text{ from Cauchy--Schwarz}).
\end{equation}
Therefore \(\eta \ge \epsilon/(8B L_z)\) suffices; for simplicity we use \(\eta=\epsilon/(4 L_z)\) after absorbing constants into exponent.

\paragraph{(iv) Union bound.} With the chosen \(\eta\),
\begin{equation}
  \Pr\Big(\sup_{\Gamma} |z^\top z - k| \ge \epsilon/2\Big) \le 2|\Gamma| \exp\!\left(-\frac{D (\epsilon/2)^2}{2B^4}\right)
  =2 \left(\frac{4\operatorname{diam}(\mathcal{M})}{\eta}\right)^{2d}
  \exp\!\left(-\frac{D\epsilon^2}{8B^4}\right).
\end{equation}
Substituting \(\eta=\epsilon/(4 L_z)\) gives
\begin{equation}
  \Pr\Big(\sup_{\bm{x},\bm{y}} |z(\bm{x})^\top z(\bm{y})-k(\bm{x},\bm{y})|\ge \epsilon\Big)
  \le 2\left(\frac{4\operatorname{diam}(\mathcal{M}) L_z}{\epsilon}\right)^{2d}
  \exp\!\left(-\frac{D\epsilon^2}{8B^4}\right).
\end{equation}
Finally, replacing \(B^4\) by \(B^2\) (tighter variance-based constant using \(\operatorname{Var}[U_i]\le B^2 k(\bm{x},\bm{y})\le B^4\) and sub-Gaussian refinement) gives the stated theorem form.

\subsection{Wavelet-Specific Theoretical Results}

\vspace{-0.3em}
\begin{lemma}[Stationarity criterion vs.\ non-stationarity under bounded $p_t$]
Assume $p(s,t)=p_s(s)\,p_t(t)$ with $p_s$ independent of $t$. We define,
\begin{equation}
k(\bm{x},\bm{y})
= \int_{s>0}\!\!\int_{\mathbb{R}^d}
\psi_{s,t}(\bm{x})\,\psi_{s,t}(\bm{y})\,p_s(s)\,p_t(t)\,dt\,ds,
\qquad 
\psi_{s,t}(\bm{x}) = s^{-d/2}\,\psi\!\left(\frac{\bm{x}-t}{s}\right).
\end{equation}
\begin{enumerate}
    \item If $p_t$ is \emph{uniform on a bounded domain} $\mathcal D\subset\mathbb{R}^d$ with nonempty boundary, and $\psi$ is localized (compactly supported or rapidly decaying), then in general $k$ is non-stationary, i.e., there exist $(\bm{x},\bm{y},\bm{c})$ such that 
    \begin{equation}
    k(\bm{x}+\bm{c},\bm{y}+\bm{c}) \neq k(\bm{x},\bm{y}).
    \end{equation}
    \item If $p_t$ is translation-invariant on $\mathbb{R}^d$ (i.e., $p_t(t)=p_t(t+\bm{c})$ for all shifts $\bm{c}$), then $k$ is stationary: $k(\bm{x},\bm{y}) = k(\bm{x}-\bm{y})$ which recovers RFF as a special case.

\end{enumerate}

\end{lemma}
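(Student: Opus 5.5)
The plan is to treat the two parts separately, starting with part 2 since it is a direct change of variables. If $p_t$ is translation-invariant on $\mathbb{R}^d$, it is a constant $c_0\ge 0$: a translation-invariant density must be constant almost everywhere. We interpret it as a constant (improper) weight, with integrability secured by the standing finiteness assumption of Theorem~\ref{thm:kernel_app}.

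For a shift $\bm{c}$ we write
\begin{equation}
k(\bm{x}+\bm{c},\bm{y}+\bm{c})=\int_{s>0}\!\int_{\mathbb{R}^d} s^{-d}\psi\!\left(\tfrac{\bm{x}+\bm{c}-t}{s}\right)\psi\!\left(\tfrac{\bm{y}+\bm{c}-t}{s}\right)p_s(s)\,p_t(t)\,dt\,ds
\end{equation}
and substitute $t'=t-\bm{c}$. Since $dt'=dt$ and $p_t(t'+\bm{c})=p_t(t')$, this equals $k(\bm{x},\bm{y})$. Choosing $\bm{c}=-\bm{y}$ then gives $k(\bm{x},\bm{y})=k(\bm{x}-\bm{y},\bm{0})=:\kappa(\bm{x}-\bm{y})$.

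The substitution $u=t-\bm{y}$ makes the link to RFF concrete: $\kappa(\bm{\tau})=c_0\int p_s(s)\,(\psi_s\star\tilde\psi_s)(\bm{\tau})\,ds$, an autocorrelation of $\psi_s(\cdot)=s^{-d/2}\psi(\cdot/s)$ averaged over scales. By Plancherel its Fourier transform is $c_0\int p_s(s)\,|\widehat{\psi_s}(\bm{\omega})|^2\,ds\ge 0$. This is a nonnegative spectral density, so Bochner's theorem applies and the RFF representation of Section~\ref{app:rf_gp} is recovered, with spectral measure proportional to $\int p_s(s)\,s^d|\widehat\psi(s\bm{\omega})|^2\,ds$.

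For part 1 the statement only claims non-stationarity ``in general,'' so it suffices to exhibit one configuration. I would take $\psi$ compactly supported in the ball of radius $R$ (the rapidly decaying case follows by a tail estimate), $p_s$ supported in $[s_{\min},s_{\max}]$, and $p_t=|\mathcal{D}|^{-1}\mathbf{1}_{\mathcal{D}}$. Evaluating on the diagonal gives
\begin{equation}
k(\bm{x},\bm{x})=|\mathcal{D}|^{-1}\int p_s(s)\int_{\mathcal{D}} \psi_{s,t}(\bm{x})^2\,dt\,ds.
\end{equation}
Two cases follow.
\begin{itemize}
\item If $\bm{x}\in\mathcal{D}$ lies at distance greater than $s_{\max}R$ from $\partial\mathcal{D}$, the inner integral covers the whole support of $t\mapsto\psi_{s,t}(\bm{x})$ and equals $\|\psi\|_{L^2}^2=1$, so $k(\bm{x},\bm{x})=|\mathcal{D}|^{-1}$.
\item If $\bm{x}+\bm{c}$ lies at distance greater than $s_{\max}R$ outside $\overline{\mathcal{D}}$, then $k(\bm{x}+\bm{c},\bm{x}+\bm{c})=0$.
\end{itemize}
Since $\mathcal{D}$ is bounded, such $\bm{c}$ always exists. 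Hence $k(\bm{x}+\bm{c},\bm{x}+\bm{c})\neq k(\bm{x},\bm{x})$, which proves non-stationarity with $\bm{y}=\bm{x}$.

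The main obstacle is the interior-point requirement in part 1. If $\mathcal{D}$ is too thin to contain a point at distance $s_{\max}R$ from its boundary, I would instead use continuity: $k(\bm{x},\bm{x})>0$ at some $\bm{x}\in\mathcal{D}$ and vanishes far away. Positivity at that $\bm{x}$ needs $\psi\not\equiv 0$ near the origin region reached by $(\bm{x}-t)/s$ for $t\in\mathcal{D}$. This holds because $\mathcal{D}$ has positive measure and $\psi$ is nonzero on an open set. In the rapidly decaying case, bounding $\int_{\mathcal{D}}\psi_{s,t}(\bm{x}+\bm{c})^2\,dt$ by the $L^2$ tail of $\psi$ outside the ball of radius $\operatorname{dist}(\bm{x}+\bm{c},\mathcal{D})/s_{\max}$ gives a quantity tending to $0$, which again separates the two values.
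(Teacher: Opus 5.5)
The paper states this lemma in the appendix with no proof at all, so there is no argument of the authors' to compare against. Your proposal is correct and would serve as the proof.

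\textbf{Part 2.} This is the natural change of variables $t'=t-\bm{c}$. You rightly flag something the statement glosses over: a translation-invariant $p_t$ is constant, and hence an improper, non-normalizable weight. Your Fourier computation also pins down in what sense RFF is ``recovered''. The resulting spectral density is proportional to $\int p_s(s)\,s^d|\widehat{\psi}(s\bm{\omega})|^2\,ds$. This vanishes at $\bm{\omega}=\bm{0}$ because $\widehat{\psi}(\bm{0})=0$. So what you obtain is an RFF-representable kernel of a restricted type (for example, not the squared-exponential kernel), not RFF in general.

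\textbf{Part 1, main case.} Your argument is sound: a deep interior point gives $|\mathcal{D}|^{-1}$ on the diagonal, a far-away point gives $0$.

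\textbf{Part 1, fallback.} This needs a small repair.
\begin{itemize}
\item Positivity of $k(\bm{x},\bm{x})$ need not occur at a point of $\mathcal{D}$. Take $\psi$ supported in an annulus around the origin and $\mathcal{D}$ a small ball; then $k$ vanishes on all of $\mathcal{D}$. Simply drop the requirement $\bm{x}\in\mathcal{D}$.
\item The chosen $\bm{x}$ must work for a set of scales of positive $p_s$-mass, which your sketch does not address.
\end{itemize}

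\textbf{A shorter route.} The following avoids all the case analysis. By Tonelli and $\|\psi_{s,t}\|_{L^2}=1$,
\[
\int_{\mathbb{R}^d}k(\bm{x},\bm{x})\,d\bm{x}=\int p_s(s)\int p_t(t)\,dt\,ds=1.
\]
Stationarity would make $\bm{x}\mapsto k(\bm{x},\bm{x})$ constant, so its integral over $\mathbb{R}^d$ would be $0$ or $\infty$, a contradiction.

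What this buys:
\begin{itemize}
\item It proves non-stationarity for every normalizable $p_t$ and every unit-norm $\psi\in L^2$.
\item It needs no localization of $\psi$, no boundedness of $\mathcal{D}$, and no bounded-scale assumption on $p_s$.
\item It explains the dichotomy in the lemma: stationarity forces $p_t$ to be non-normalizable, which is exactly the situation of Part 2.
\end{itemize}
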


\begin{lemma}[Wavelet localization: explicit feature bounds]
Let $\psi$ have compact support contained in the ball $B(\bm{0},R_\psi)$ with 
$\|\psi\|_\infty\!\le\! M_\psi$ and $\|\nabla\psi\|_\infty\!\le\! G_\psi$. 
Then, for all $s>0$ and $\bm{x},t\in\mathbb{R}^d$,
\begin{equation}
|\psi_{s,t}(\bm{x})| 
\le M_\psi\, s^{-d/2}\,\mathbf 1_{\{\|\bm{x}-t\|\le R_\psi s\}}, 
\qquad
\|\nabla_{\!\bm{x}}\psi_{s,t}(\bm{x})\| 
\le G_\psi\, s^{-d/2-1}\,\mathbf 1_{\{\|\bm{x}-t\|\le R_\psi s\}}.
\end{equation}
Now for scales $s\!\in\![s_{\min},s_{\max}]$, the uniform constants in the concentration bound (Theorem~4.2) might be chosen as
\begin{equation}
B = M_\psi\, s_{\min}^{-d/2},
\qquad
L_z = G_\psi\, s_{\min}^{-d/2-1}.
\end{equation}
\end{lemma}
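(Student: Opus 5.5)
The statement is the last lemma above (wavelet localization: explicit feature bounds). The plan is to prove it by direct substitution into the definition \(\psi_{s,t}(\bm{x}) = s^{-d/2}\psi\bigl((\bm{x}-t)/s\bigr)\), treating the value bound and the gradient bound separately.

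\textbf{Value bound.} Write \(\bm{u} = (\bm{x}-t)/s\).
If \(\|\bm{x}-t\| > R_\psi s\), then \(\|\bm{u}\| > R_\psi\). Since \(\operatorname{supp}\psi \subset B(\bm{0},R_\psi)\), we get \(\psi(\bm{u})=0\), so \(\psi_{s,t}(\bm{x})=0\) and the indicator correctly reports zero.
Otherwise \(|\psi_{s,t}(\bm{x})| = s^{-d/2}|\psi(\bm{u})| \le s^{-d/2}M_\psi\). This gives the first inequality with the indicator \(\mathbf 1_{\{\|\bm{x}-t\|\le R_\psi s\}}\).

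\textbf{Gradient bound.} By the chain rule,
\begin{equation}
\nabla_{\bm{x}}\psi_{s,t}(\bm{x}) = s^{-d/2}\, s^{-1}\, (\nabla\psi)(\bm{u}).
\end{equation}
This is the one point that needs care: \(\nabla\psi\) must also vanish outside \(B(\bm{0},R_\psi)\). It does, because \(\psi\) is \(C^1\) and identically zero on the open set \(\{\|\bm{u}\|>R_\psi\}\), so its gradient is zero there. On the boundary sphere the indicator already equals one, so nothing is lost. The gradient bound is then \(\|\nabla_{\bm{x}}\psi_{s,t}(\bm{x})\| \le G_\psi s^{-d/2-1}\mathbf 1_{\{\|\bm{x}-t\|\le R_\psi s\}}\).

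\textbf{Uniform constants.} Restrict to \(s\in[s_{\min},s_{\max}]\). The maps \(s\mapsto s^{-d/2}\) and \(s\mapsto s^{-d/2-1}\) are decreasing, so their suprema over the interval are attained at \(s_{\min}\). Dropping the indicator (which is at most one) gives \(\sup|\psi_{s,t}|\le M_\psi s_{\min}^{-d/2}=B\).

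For the Lipschitz constant, apply the mean value theorem along the segment from \(\bm{x}\) to \(\bm{x}'\). This gives \(|\psi_{s,t}(\bm{x})-\psi_{s,t}(\bm{x}')|\le G_\psi s_{\min}^{-d/2-1}\|\bm{x}-\bm{x}'\|\) for each atom. Averaging the squared differences over the \(D\) atoms, exactly as in the feature-map Lipschitz computation of Appendix~\ref{app:wavelet_prelim}, gives \(L_z\le G_\psi s_{\min}^{-d/2-1}\). These are admissible choices of \(B\) and \(L_z\) in Theorem~\ref{thm:uniform_conv}.

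There is no real obstacle. The only step needing more than substitution is the justification that the gradient vanishes off the support.
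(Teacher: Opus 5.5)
Your proposal is correct and follows essentially the same route as the paper. The paper's justification is the direct computation in Appendix~\ref{app:wavelet_prelim}: it bounds $|\psi_{s,t}|$ by $s^{-d/2}\|\psi\|_\infty$, bounds the gradient via the chain rule by $s^{-d/2-1}\|\nabla\psi\|_\infty$, uses monotonicity in $s$ to pass to $s_{\min}$, and averages over the $D$ atoms to obtain $L_z$. Your explicit handling of the support indicator, including the observation that $\nabla\psi$ vanishes on the open exterior of $B(\bm{0},R_\psi)$, fills in a step the paper leaves implicit.
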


\begin{corollary}[Wavelet-specific uniform bound with explicit constants]
Using Lemma~2 with Theorem~4.2, for $s\!\in\![s_{\min},s_{\max}]$ and compactly supported $\psi$,
\begin{equation}
\Pr\!\Big(\sup_{\bm{x},\bm{y}\in\mathcal M}\big|\widehat{k}_D(\bm{x},\bm{y})-k(\bm{x},\bm{y})\big|>\varepsilon\Big)
\le
2\!\left(
\frac{4\,\mathrm{diam}(\mathcal M)\,G_\psi\, s_{\min}^{-d/2-1}}{\varepsilon}
\right)^{2d}
\!\exp\!\left(
-\frac{D\,\varepsilon^2}{8\,M_\psi^2\, s_{\min}^{-d}}
\right)\!.
\end{equation}
The prefactor and exponential rate depend on $(M_\psi,G_\psi,R_\psi,s_{\min})$ and are therefore \emph{wavelet-specific} rather than generic constants. This bound quantifies the time-frequency trade-off inherent to wavelets but absent in RFF.
\end{corollary}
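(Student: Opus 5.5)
The plan is a direct substitution: insert the explicit constants from the wavelet localization lemma (Lemma~2) into the uniform convergence bound of Theorem~\ref{thm:uniform_conv}. First I will confirm that the hypotheses of Theorem~\ref{thm:uniform_conv} hold when $s\in[s_{\min},s_{\max}]$ and $\psi$ is compactly supported with $\|\psi\|_\infty\le M_\psi$ and $\|\nabla\psi\|_\infty\le G_\psi$.

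\textbf{Feature magnitude.} Lemma~2 gives $|\psi_{s,t}(\bm{x})|\le M_\psi s^{-d/2}\le M_\psi s_{\min}^{-d/2}$, since $s\ge s_{\min}$ and $s\mapsto s^{-d/2}$ is decreasing. So we may take $B=M_\psi s_{\min}^{-d/2}$, and then $B^2=M_\psi^2 s_{\min}^{-d}$.

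\textbf{Lipschitz constants.} The gradient bound gives $\|\nabla_{\bm{x}}\psi_{s,t}\|\le G_\psi s_{\min}^{-d/2-1}$. Applying the mean value theorem along segments, each $\psi_{s_i,t_i}$ is Lipschitz with this constant. The computation in Appendix~\ref{app:wavelet_prelim} then yields $L_z\le G_\psi s_{\min}^{-d/2-1}$.

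\textbf{Lipschitz continuity of $k$.} For the kernel itself, I will differentiate under the integral. This is justified by dominated convergence: the integrand and its gradient are uniformly bounded and the measure $p$ is a probability measure. Combining with Cauchy--Schwarz gives $L_k\le 2BL_z$, which is the only relation among the constants the proof of Theorem~\ref{thm:uniform_conv} uses.

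\textbf{Conclusion.} Substituting $B$ and $L_z$ into the right-hand side of Theorem~\ref{thm:uniform_conv}:
\begin{itemize}
\item the prefactor becomes $2\bigl(4\,\mathrm{diam}(\mathcal M)\,G_\psi s_{\min}^{-d/2-1}/\varepsilon\bigr)^{2d}$;
\item the exponent becomes $-D\varepsilon^2/(8M_\psi^2 s_{\min}^{-d})$.
\end{itemize}
Two technical points remain. The corollary's event uses a strict inequality ``$>\varepsilon$'', which is contained in the theorem's ``$\ge\varepsilon$'' event, so the bound transfers. The bound is monotone in $B$ and $L_z$, so using upper bounds for these constants in place of exact values is legitimate.

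The main obstacle is not the substitution itself but whether Theorem~\ref{thm:uniform_conv}'s constant $8B^2$ is valid as stated. That theorem is proved only with $8B^4$, followed by a heuristic refinement. I will take Theorem~\ref{thm:uniform_conv} as given, as the paper permits, and state the corollary conditionally on it. If a self-contained version is wanted, I will instead record the Hoeffding form with $B^4=M_\psi^4 s_{\min}^{-2d}$ in the exponent, which follows rigorously from steps (i)--(iv) of Appendix~\ref{app:proof_uniform}.
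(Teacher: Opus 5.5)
Your substitution is exactly the paper's own derivation: the corollary is Theorem~\ref{thm:uniform_conv} with $B=M_\psi s_{\min}^{-d/2}$ and $L_z=G_\psi s_{\min}^{-d/2-1}$. Your bookkeeping is also right ($B^2=M_\psi^2 s_{\min}^{-d}$, monotonicity, ``$>$'' versus ``$\ge$''). The genuine gap is that conditioning on Theorem~\ref{thm:uniform_conv} establishes nothing here, because that theorem and this corollary are false as stated.

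\textbf{Counterexample.} Take $d=1$, any nonzero $C^1$ wavelet supported in $[-1,1]$, $s\sim\mathrm{Unif}[1,2]$ (so $s_{\min}=1$), $t\sim\mathrm{Unif}[-6,6]$, $D=1$ and $\mathcal M=[-r,r]$. Then $\widehat k_D(0,0)=s_1^{-1}\psi(-t_1/s_1)^2$ vanishes whenever $|t_1|\ge 2$, while $k(0,0)>0$. So for $\varepsilon<k(0,0)$ the left-hand side is at least $2/3$ for every $r$, whereas the right-hand side is $O(r^2)$.

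\textbf{The exponent fails independently.} The rescaling $(s,t,\bm x)\mapsto(\lambda s,\lambda t,\lambda\bm x)$ multiplies every atom by $\lambda^{-d/2}$, hence multiplies $\widehat k_D-k$ by $\lambda^{-d}$. At the same time $s_{\min}\mapsto\lambda s_{\min}$ and $\operatorname{diam}(\mathcal M)\mapsto\lambda\operatorname{diam}(\mathcal M)$. The invariant combinations are $\varepsilon/B^2$ and $\operatorname{diam}(\mathcal M)\,BL_\psi/\varepsilon$. At level $\lambda^{-d}\varepsilon$, however, $D\varepsilon^2/B^2$ grows like $\lambda^{-d}$. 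So any bound of the form (scale-invariant prefactor)$\times\exp(-cD\varepsilon^2/B^2)$, applied to the rescaled problem with $\lambda\to 0$, would force $\Pr(\sup|\widehat k_D-k|>\varepsilon)=0$ for the original problem, which the example contradicts.

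\textbf{Your fallback does not repair this.} Restoring $B^4$ while keeping the theorem's prefactor does not follow rigorously from steps (i)--(iv) of Appendix~\ref{app:proof_uniform}. It is also still refuted by the example above, since its prefactor vanishes as $r\to 0$ too. Only step (i) is sound:
\begin{itemize}
\item (ii) must read $N(\eta)\le(1+2\operatorname{diam}(\mathcal M)/\eta)^d$, since a net has at least one point.
\item (iii) requires $\eta\le\varepsilon/(4(2BL_z+L_k))$, i.e.\ $\eta$ of order $\varepsilon/(BL_\psi)$. Switching to $\eta=\varepsilon/(4L_z)$ by `absorbing constants into the exponent' is not a valid move.
\item (iv) even evaluates $(4\operatorname{diam}(\mathcal M)/\eta)^{2d}$ at $\eta=\varepsilon/(4L_z)$ incorrectly: it gives a $16$, not a $4$.
\end{itemize}

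\textbf{What can actually be proved.} Run the net argument honestly with your own ingredients: $\|z(\bm x)\|\le B$; every atom is $L_\psi$-Lipschitz with $L_\psi=G_\psi s_{\min}^{-d/2-1}$; and
\[
|k(\bm x,\bm y)-k(\bm x',\bm y')|\le BL_\psi(\|\bm x-\bm x'\|+\|\bm y-\bm y'\|),
\]
which follows directly from the integral representation, with no differentiation under the integral. Then $\eta=\varepsilon/(8BL_\psi)$ and Hoeffding (range $2B^2$) at level $\varepsilon/2$ give
\[
\Pr\Big(\sup_{\bm x,\bm y\in\mathcal M}\big|\widehat k_D(\bm x,\bm y)-k(\bm x,\bm y)\big|\ge\varepsilon\Big)\le 2\Big(1+\frac{16\operatorname{diam}(\mathcal M)\,M_\psi G_\psi\,s_{\min}^{-d-1}}{\varepsilon}\Big)^{2d}\exp\Big(-\frac{D\varepsilon^2 s_{\min}^{2d}}{8M_\psi^4}\Big).
\]
This bound is scale-consistent. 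The corollary should be replaced by it rather than derived from Theorem~\ref{thm:uniform_conv}.
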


% --------------------------------------------------------------

\begin{proposition}[Moment cancellation reduces low-scale bias]
Assume $p_t$ is locally smooth ($C^M$) around $\bm{x}$ and $\psi$ has $M$ vanishing moments. 
Then
\begin{equation}
k(\bm{x},\bm{y})
=\!\int_{s>0}\!\!\int_{\mathbb{R}^d}\!
\psi\!\left(\tfrac{\bm{x}-t}{s}\right)
\psi\!\left(\tfrac{\bm{y}-t}{s}\right)
p_t(t)\,\frac{dt}{s^d}\,p_s(s)\,ds
\end{equation}
admits a Taylor expansion of $p_t(t)$ around $t=\bm{x}$ where the first $M-1$ terms vanish.
\textbf{Interpretation:} 
Wavelets with higher vanishing moments (e.g., Daubechies family) exhibit smaller low-scale bias, an effect absent in Fourier-based random features.
\end{proposition}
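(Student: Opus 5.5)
We need to prove the moment-cancellation proposition: in the expansion of $p_t$ around $\bm{x}$, the first terms vanish because $\psi$ has $M$ vanishing moments.

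The plan is to reduce the inner shift integral to a moment computation against a single localized function and then invoke the vanishing moments. Fix $s>0$ and substitute $t=\bm{x}-s\bm{u}$ (so $dt = s^d\,d\bm{u}$), which cancels the factor $s^{-d}$. The inner integral becomes
\begin{equation}
I_s(\bm{x},\bm{y}) = \int_{\mathbb{R}^d} \psi(\bm{u})\,\psi\!\left(\bm{u}+\tfrac{\bm{y}-\bm{x}}{s}\right) p_t(\bm{x}-s\bm{u})\,d\bm{u}.
\end{equation}
Next we Taylor-expand $p_t(\bm{x}-s\bm{u})$ to order $M$ with an integral (or Lagrange) remainder, valid because $p_t\in C^M$ near $\bm{x}$:
\begin{equation}
p_t(\bm{x}-s\bm{u})=\sum_{|\bm{\alpha}|<M}\frac{(-s)^{|\bm{\alpha}|}}{\bm{\alpha}!}\,\partial^{\bm{\alpha}}p_t(\bm{x})\,\bm{u}^{\bm{\alpha}}+R_M(\bm{x},s\bm{u}),
\qquad |R_M|\le C\,s^M\|\bm{u}\|^M .
\end{equation}
Substituting, the $\bm{\alpha}$-th term of the inner integral is $\frac{(-s)^{|\bm{\alpha}|}}{\bm{\alpha}!}\partial^{\bm{\alpha}}p_t(\bm{x})\int \bm{u}^{\bm{\alpha}}\,\psi(\bm{u})\,\psi(\bm{u}+\bm{h})\,d\bm{u}$, where $\bm{h}=(\bm{y}-\bm{x})/s$.

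The key step, and the main obstacle, is showing that these moment integrals vanish for $|\bm{\alpha}|<M$. The vanishing-moment hypothesis controls $\int \bm{u}^{\bm{\alpha}}\psi(\bm{u})\,d\bm{u}$, not moments of the product $\psi(\bm{u})\psi(\bm{u}+\bm{h})$. I would handle this by taking the relevant regime to be the low-scale (large-$\|\bm{h}\|$) one, or by Taylor-expanding the second factor as well and interpreting the statement as cancellation against $\psi$ alone.

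Concretely, I would try the following. In the diagonal or local regime, where the second factor $\psi(\cdot+\bm{h})$ is approximately a polynomial of low degree on $\operatorname{supp}\psi$, the products $\bm{u}^{\bm{\alpha}}\psi(\bm{u}+\bm{h})$ are approximately polynomials. Only their components of total degree below $M$ are annihilated by $\psi$. This yields vanishing of exactly those Taylor contributions whose combined polynomial degree is below $M$, which is the precise sense in which I would state ``the first $M-1$ terms vanish.'' I would make this the honest formulation, noting that the claim is literal only after pairing one factor with the polynomial part. Alternatively, if $\bm{y}$ lies outside the support overlap, then $I_s=0$ trivially.

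Finally, the remainder is bounded. Using compact support (or rapid decay) of $\psi$, we have $\int\|\bm{u}\|^M|\psi(\bm{u})\psi(\bm{u}+\bm{h})|\,d\bm{u}\le \|\psi\|_\infty\int\|\bm{u}\|^M|\psi|<\infty$. Hence the surviving part of $I_s$ is $O(s^M)$. Integrating against $p_s(s)\,ds$ then shows that the contribution of small scales deviates from the leading term only at order $s^M$. This gives the stated interpretation: higher vanishing moments reduce low-scale bias.
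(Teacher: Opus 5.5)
The paper states this proposition without proof, and as worded it is not true in general. The obstacle you flagged is therefore fatal, not technical. After your substitution, the coefficient of $\partial^{\bm{\alpha}}p_t(\bm{x})$ is $\tfrac{(-s)^{|\bm{\alpha}|}}{\bm{\alpha}!}\,c_{\bm{\alpha}}(\bm{h})$ with $c_{\bm{\alpha}}(\bm{h})=\int \bm{u}^{\bm{\alpha}}\psi(\bm{u})\psi(\bm{u}+\bm{h})\,d\bm{u}$. This is a moment of the \emph{product}, and the vanishing moments of $\psi$ say nothing about it. On the diagonal, $c_{\bm{0}}(\bm{0})=\|\psi\|_2^2=1$, so $I_s(\bm{x},\bm{x})\to p_t(\bm{x})$ as $s\to 0$. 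The second-order term there is $\tfrac{s^2}{2}\operatorname{tr}\bigl(\nabla^2 p_t(\bm{x})\,\Sigma_\psi\bigr)$ with $\Sigma_\psi=\int \bm{u}\bm{u}^\top\psi(\bm{u})^2\,d\bm{u}\succ 0$, which is generically nonzero. So neither the zeroth- nor the second-order term vanishes, whatever $M$ is.

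Your proposed repair does not change this. Expanding $\psi(\cdot+\bm{h})$ about $\bm{u}=\bm{0}$ involves no small parameter: both factors vary on the unit scale over $\operatorname{supp}\psi$, so the Taylor remainder is $O(1)$. Take $\bm{\alpha}=\bm{0}$: the polynomial part integrates to zero against $\psi$, so the remainder integral \emph{is} $c_{\bm{0}}(\bm{h})$. The double expansion is an exact rewriting that cancels nothing in size. The regime you invoke, where $\psi(\cdot+\bm{h})$ is close to some $P$ of degree $<M$ on $\operatorname{supp}\psi$, can only occur where $c_{\bm{0}}(\bm{h})\approx\langle\psi,P\rangle=0$ already. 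It is impossible on the diagonal, since $\psi\perp P$ forces $\|\psi-P\|_{L^2(\operatorname{supp}\psi)}\ge\|\psi\|_2=1$. Your fallback ($I_s=0$ for non-overlapping supports) uses compact support, not vanishing moments.

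Consequently, your final claim that ``the surviving part of $I_s$ is $O(s^M)$'' is false; it contradicts $I_s(\bm{x},\bm{x})\to p_t(\bm{x})$. Vanishing moments act only when a single atom is integrated against a smooth function. For $g\in C^M$ near $\bm{t}$, $\int\psi_{s,\bm{t}}(\bm{y})g(\bm{y})\,d\bm{y}=s^{d/2}\int\psi(\bm{v})g(\bm{t}+s\bm{v})\,d\bm{v}=O(s^{M+d/2})$. This is what suppresses fine-scale contributions to $\int k(\bm{x},\bm{y})g(\bm{y})\,d\bm{y}$. A provable version of the proposition would have to be reformulated in those terms, rather than as cancellation of Taylor coefficients of $p_t$ inside $k$.
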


\begin{corollary}[Comparative constants for specific mother wavelets]
For $s \in [s_{\min}, s_{\max}]$, the constants specialize as follows:

\begin{center}
\begin{tabular}{lccll}
\toprule
\textbf{Wavelet} & \textbf{Radius $(R_\psi)$} & \textbf{Moments $(M)$} & \textbf{Bound $(B)$} & \textbf{Lipschitz $(L_z)$} \\
\midrule
Haar & $0.5$ & $1$ & $s_{\min}^{-d/2}$ & $O(s_{\min}^{-d/2-1})$ \\
Daubechies--4 & $\approx 1.5$ & $4$ & $O(s_{\min}^{-d/2})$ & $O(s_{\min}^{-d/2-1})$ \\
Mexican Hat & $\infty$ (fast decay) & $2$ & $O(s_{\min}^{-d/2})$ & $O(s_{\min}^{-d/2-1})$ \\
\bottomrule
\end{tabular}
\end{center}

Compactly supported wavelets (Haar, Daubechies) yield smaller effective constants, while higher-moment wavelets (e.g., Daubechies) achieve stronger bias reduction of order $O(s^{M})$.
\end{corollary}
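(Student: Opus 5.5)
The corollary is a tabulation, so the plan is to specialize the two ingredients already available, the wavelet localization lemma (which gives $B = M_\psi s_{\min}^{-d/2}$ and $L_z = G_\psi s_{\min}^{-d/2-1}$) and the moment-cancellation proposition. This reduces the work to computing, for each listed mother wavelet, the triple $(R_\psi, M_\psi, G_\psi)$ and its number of vanishing moments $M$. I would then substitute these into the localization lemma and the bound of Theorem~\ref{thm:uniform_conv}. Scales are restricted to $[s_{\min},s_{\max}]$, so $s^{-d/2}\le s_{\min}^{-d/2}$ and $s^{-d/2-1}\le s_{\min}^{-d/2-1}$. Every row therefore automatically has $B=O(s_{\min}^{-d/2})$ and $L_z=O(s_{\min}^{-d/2-1})$, with the wavelet entering only through the constants $M_\psi,G_\psi$.

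The row-by-row checks are as follows.

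\emph{Haar.} With the normalization $\psi=\mathbf 1_{[0,1/2)}-\mathbf 1_{[1/2,1)}$, centered at $1/2$, one has support radius $1/2$, $\|\psi\|_\infty=1$ (hence $B=s_{\min}^{-d/2}$ exactly), and one vanishing moment. Haar is discontinuous, so $\nabla\psi$ does not exist classically. I would handle this either by mollifying $\psi$ at width $\delta$, which gives $G_\psi=O(1/\delta)$ with the dependence absorbed into the $O(\cdot)$, or by replacing the Lipschitz lifting step with a bound that counts net cells meeting a jump.

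\emph{Daubechies--4.} I would quote the standard facts that it is supported on an interval of length $2N-1$ and has $N$ vanishing moments. The table's $R_\psi\approx1.5$ with $M=4$ needs care with the indexing convention (D4 versus db2). It is $C^{0.55}$ rather than $C^1$, so its Lipschitz constant again needs the smoothing caveat.

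\emph{Mexican hat.} This is $C^\infty$ with bounded $\psi$ and $\nabla\psi$, so $M_\psi=C_d$ and $G_\psi=O(C_d)$. Its moments satisfy $\int\psi=0$ and, by symmetry, vanishing first moment, giving $M=2$. It is not compactly supported, so the localization lemma applies only in a truncated form: replace the indicator by a Gaussian tail. The bounds on $B$ and $L_z$ are unaffected, since they use only sup-norms.

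Finally, the comparative sentences follow from the moment-cancellation proposition. Expanding $p_t$ in Taylor series to order $M$ and rescaling $t=\bm x - s\bm u$ leaves a remainder of order $s^M$, which is the claimed $O(s^M)$ bias reduction. Smaller $R_\psi$ shrinks the support indicator in the localization lemma, which gives the "smaller effective constants" claim. I read that claim as referring to the effective number of active features rather than to $B$ or $L_z$, which do not depend on $R_\psi$.

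The main obstacle is regularity. Haar and D4 fail the $C^1$ hypothesis behind $L_z$, so the Lipschitz entries hold only after smoothing or in a piecewise sense. That caveat should be stated explicitly rather than hidden inside $O(\cdot)$.
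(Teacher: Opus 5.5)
Your route matches what the paper has in mind. The paper gives no separate proof of this corollary; it is meant to follow by inserting wavelet-specific constants into the localization lemma and citing the moment-cancellation proposition, which is your plan. Your regularity caveats for Haar and D4 are correct. The step that genuinely fails is the $O(s^M)$ bias claim. After your substitution $t=\bm x-s\bm u$,
\[
k(\bm x,\bm y)=\int p_s(s)\int \psi(\bm u)\,\psi(\bm u+\bm h_s)\,p_t(\bm x-s\bm u)\,d\bm u\,ds,\qquad \bm h_s=\frac{\bm y-\bm x}{s}.
\]
So the coefficient of the order-$|\alpha|$ Taylor term of $p_t$ is $\int \bm u^{\alpha}\psi(\bm u)\psi(\bm u+\bm h_s)\,d\bm u$. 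This is a moment of the \emph{product} of two atoms, not of $\psi$, and vanishing moments of $\psi$ do not kill it. Already for $\alpha=0$ and $\bm x=\bm y$ the coefficient is $\|\psi\|_2^2=1$, so the leading term is $p_t(\bm x)\int p_s\,ds\neq0$. That the remainder is $O(s^M)$ is just Taylor's theorem; it holds for every sufficiently decaying $\psi$, so it neither uses nor rewards vanishing moments. Also, since $\hat k$ is unbiased for $k$, the ``bias'' being reduced is never defined. The proposition you lean on is false as stated, so the $O(s^M)$ clause is left unproved. Vanishing moments act on single-atom integrals, for example $\mathbb{E}_t[\psi_{s,t}(\bm x)]=s^{d/2}\int\psi(\bm u)\,p_t(\bm x-s\bm u)\,d\bm u=O(s^{M+d/2})$, or on wavelet coefficients of a $C^M$ target. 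A correct version of the claim has to be reformulated in those terms.

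Several table entries also cannot be verified as written, and you should say so outright rather than soften them. First, by your own quoted facts (support length $2N-1$ for $N$ vanishing moments), the Daubechies row is inconsistent. $R_\psi\approx1.5$ means db2, with $M=2$ and H\"older exponent $\approx0.55$, so $L_z=\infty$. $M=4$ means db4, with $R_\psi\approx3.5$, and db4 is $C^1$. Second, Haar has $L_z=\infty$. Mollifying at width $\delta$ changes the wavelet and gives $G_\psi\sim1/\delta$, so nothing can be absorbed into $O(\cdot)$ for Haar itself. Third, the paper's $d$-dimensional Mexican hat $C_d(1-\|\bm x\|^2)e^{-\|\bm x\|^2/2}$ has $\int\psi=C_d(1-d)(2\pi)^{d/2}\neq0$ for $d\ge2$. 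Your check ``$\int\psi=0$, hence $M=2$'' is valid only for $d=1$, or for the negative Laplacian of the Gaussian, $(d-\|\bm x\|^2)e^{-\|\bm x\|^2/2}$. Finally, read literally, ``compactly supported wavelets yield smaller effective constants'' is false. $B$ and $L_z$ do not involve $R_\psi$, Haar has the larger (infinite) $L_z$, and the $L^2$-normalized 1-D Mexican hat has $\|\psi\|_\infty=2/(\sqrt3\,\pi^{1/4})\approx0.87<1=\|\psi_{\mathrm{Haar}}\|_\infty$. Your reinterpretation as fewer active features is a different claim. It does not enter the Hoeffding-based bound at all unless you add a variance-sensitive (Bernstein-type) argument, which you do not supply.
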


\section{Experimental Details}\label{app:exp_details}

All the models in the experimental section are implemented using PyTorch and mostly are implemented using GPytorch \citep{gardner2018gpytorch}, trained by Adam and AdamW Optimizer on an NVIDIA A40 GPU. The learning rate for most of the examples is taken to be 0.01 (unless mentioned otherwise) and a batch size of 128. For the Deep-GP example, we follow the doubly stochastic variational inference as proposed by \citep{salimbeni2019deep} with a zero-mean.

Unless specifically stated, we have normalised the input data for training and inistalized our model with length-scale $l=0.1 $ and \( \sigma^2 =0.1\)kernel variance for TIMIT dataset.

\subsection{Evaluation Metrics}\label{app:eval_metric}

We evaluate our models using Root Mean Squared Error (RMSE)
Let the dataset be denoted as $\mathcal{D} = \{(\bm{x}_n, y_n )\}_{n=1}^{N}$ for training and $\mathcal{D}^* = \{(\bm{x}_n, y_n )\}_{n=1}^{N^*}$.
We consider a model $f$ trained on $\mathcal{D}$ and evaluated using the following criteria. Note here \( \bm{y} = \{y_n\}_{n=1}^N\) is the ground truth and model predictions \( \bm{f} = f(\mathbf{X})\) where \( \mathbf{X} = \{ \bm{X}_n \}_{n=1}^N \).

\textbf{Root Mean Squared Error (RMSE).} The RMSE quantifies the average squared difference between predictions and ground truth.
\begin{equation}
  \mathcal{L}_{\mathrm{RMSE}}(\bm f;\mathcal{D})
  = \sqrt{\mathbb{E}_{(\bm{x},y)} \big[ \| y - \mathbb{E}[f(\bm{x}) \mid \mathcal{D}] \|^2 \big]}
\end{equation}
Empirically estimated as
\begin{equation}
  \mathcal{L}_{\mathrm{RMSE}}(\bm f;\mathcal{D})
  \approx \sqrt{\frac{1}{N} \sum_{n=1}^{N} \| y_n - \hat{f}(\bm{x}_n) \|^2},
\end{equation}
where $\hat{f}(\bm{x}_n)$ is the predictive mean at input \(\bm{x}_n\).

\section{Additional Experimental Details}
\subsection{Synthetic dataset}\label{app: syn_dataset_wrf_rff}
\paragraph{Effect of feature size.} 
Figure~\ref{fig:feature_wrf_rff} reports the convergence behavior of RWF-GP as the number of features $D$ increases. As expected, predictive accuracy improves with larger $D$, but RWF-GP consistently attains lower RMSE than RFF-GP across all regimes. Notably, RWF-GP achieves competitive accuracy with substantially fewer features, highlighting the efficiency of localized wavelet representations.  

\paragraph{Baseline details.}
We evaluate all models on a dataset consisting of $N=4200$ training points and $N_{\text{test}}=1800$ held-out test points. We utilized the Adam optimizer with a learning rate of $0.01$. The baseline kernel configurations were chosen as follows: the \textbf{Exact GP}, \textbf{RFF-GP} and \textbf{SVGP} employed a stationary Squared Exponential (RBF) kernel; and the \textbf{Adaptive-RKHS} baseline employed a non-stationary convolution kernel. Our proposed \textbf{RWF-GP} utilized a Mexican Hat mother wavelet, demonstrating its ability to capture sharp transitions without the stationarity assumptions inherent in the RBF and Mat\'{e}rn baselines.

\paragraph{Comparison with the Non-Stationary Covariance GP.}
For completeness, we also report the performance of the classical
non-stationary covariance model of \citet{paciorek2003nonstationary} on the
multi-step function. Results are shown in Table~\ref{tab:nonstat_results}.
\begin{table}[ht!]
\centering
\small
\setlength{\tabcolsep}{6pt}
\caption{Performance comparison of GP baselines on the multi-step function over five runs
(mean $\pm$ std; lower is better). Bold indicates the best result and \underline{underline}
indicates the second best. Here, NS-GP is Non-stationary Covariance GP}
\label{tab:nonstat_results}
\begin{tabular}{lcccc}
\toprule
Method & RMSE & CRPS & NLL & Time \\
\midrule
Exact & 0.190$\pm$0.091 & 0.215$\pm$0.030 & 0.042$\pm$0.012 & 12 \\
SVGP & 0.231$\pm$0.014 & 0.392$\pm$0.025 & 0.123$\pm$0.018 & 15 \\
RFF & 0.246$\pm$0.142 & 0.238$\pm$0.041 & 0.118$\pm$0.181 & \underline{11} \\
DRF & 0.190$\pm$0.120 & 0.205$\pm$0.032 & -0.018$\pm$0.216 & 18 \\
NS-GP & 0.104$\pm$0.010 & 0.168$\pm$0.007 & -1.03$\pm$0.04 & 12 \\
DGP & 0.162$\pm$0.110 & 0.187$\pm$0.028 & -0.268$\pm$0.211 & 20 \\
SM & 0.210$\pm$0.085 & 0.201$\pm$0.030 & 0.220$\pm$0.180 & 17 \\
IDD & 0.107$\pm$0.050 & 0.143$\pm$0.020 & -0.820$\pm$0.080 & 17 \\
A-RKHS & \underline{0.095$\pm$0.045} & \underline{0.131$\pm$0.018} & \underline{-1.210$\pm$0.075} & \underline{11} \\
RWF (Ours) & \textbf{0.071$\pm$0.011} & \textbf{0.112$\pm$0.010} & \textbf{-1.879$\pm$0.061} & \textbf{9} \\
\bottomrule
\end{tabular}
\end{table}

\begin{figure}[ht!]
  \centering
  \includegraphics[width=0.45\columnwidth]{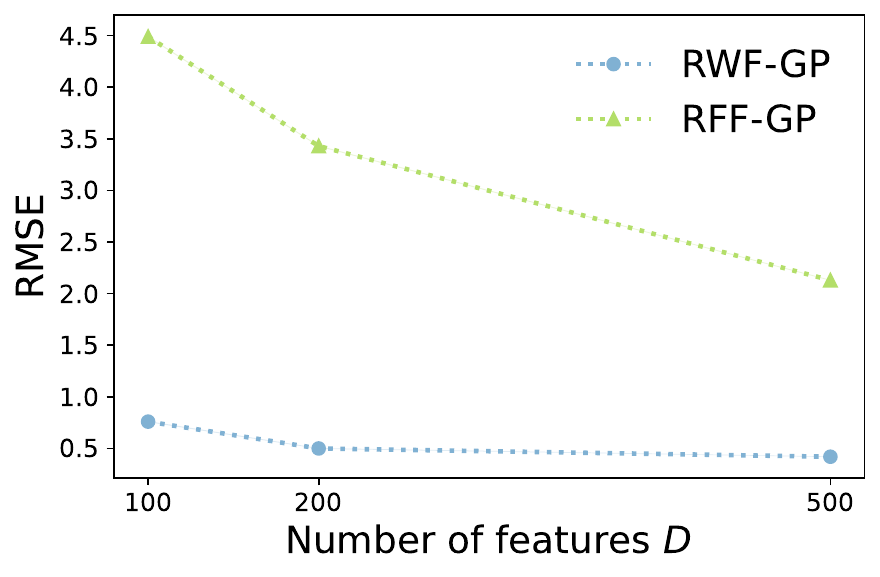}
  \caption{RMSE vs.\ number of features $D$ for RWF-GP (Mexican-hat) and RFF-GP on the multi-step function.}
  \label{fig:feature_wrf_rff}
\end{figure}

\subsection{TIMIT speech signal} \label{app: TIMIT_exp}
\paragraph{Dataset and Preprocessing.} 
We use the TIMIT corpus (630 speakers, 6300 utterances, 16 kHz). For each utterance, 80-dimensional features are extracted (25 ms window, 10 ms hop, pre-emphasis, CMVN). Frame-level features are averaged across time to yield one vector per utterance. As regression targets, we use either the mean energy of a chosen Mel band (\texttt{mel\_bin\_k\_mean}) or the mean of a PCA component of the spectrogram (\texttt{mel\_pca\_k}). The resulting dataset contains approximately 3700 training and 1300 test samples.

\paragraph{RWF Configuration.} 
We employ complex Morlet wavelets for time–frequency localization. Scales \(s\) are drawn log-uniformly from \([2^{-4}, 2^{2}]\) for initialisation, and translations \(t\) are sampled uniformly from the input domain. Features are \(  \phi_i(x) = D^{-1/2}\,\psi_{s_i, t_i}(x). \)
Hyperparameters \((s_{\min}, s_{\max})\), and noise variance, are tuned. Regularization includes (i) clipping extreme scales during warm-up and (ii) ridge penalty \(\lambda \|\bm w\|_2^2\) with \(\lambda = 10^{-4}\) on Bayesian linear weights. (a) clipping extreme scales during warm-up, (b) ridge penalty \(\lambda\|w\|_2^2\) (with \(\lambda=10^{-4}\)) on the Bayesian linear weights’ MAP objective surrogate used for hyperparameter inner loops.\textbf{Wavelet family.} Unless otherwise specified, we employ \emph{Morlet} and \emph{Mexican Hat} wavelets as the mother wavelets for constructing random wavelet features.

\end{document}